
\documentclass{article}

\usepackage[dvipsnames]{xcolor}
\usepackage{microtype}
\usepackage{graphicx}
\usepackage{subfigure}
\usepackage{booktabs} 
\usepackage{url}

\usepackage{hyperref}

\usepackage[accepted]{icml2025}

\usepackage{amsmath}
\usepackage{amssymb}
\usepackage{mathtools}
\usepackage{amsthm}

\usepackage[capitalize,noabbrev]{cleveref}

\usepackage{bm}
\usepackage{enumitem}
\usepackage{algorithm, algorithmic}
\usepackage{dsfont}
\usepackage{booktabs}
\usepackage{xspace}
\usepackage{pifont}
\usepackage{xcolor}
\usepackage{multirow}
\theoremstyle{plain}
\newtheorem{theorem}{Theorem}[section]
\newtheorem{proposition}[theorem]{Proposition}

\theoremstyle{definition}

\newtheorem{assumption}[theorem]{Assumption}
\theoremstyle{remark}

\usepackage[textsize=tiny]{todonotes}

\usepackage{amsmath,amsfonts,bm}









\def\eqref#1{equation~\ref{#1}}









\def\1{\bm{1}}








\def\vepsilon{{\bm{\epsilon}}}

\def\vf{{\bm{f}}}

\def\vu{{\bm{u}}}
\def\vv{{\bm{v}}}

\def\vx{{\bm{x}}}
\def\vy{{\bm{y}}}
\def\vz{{\bm{z}}}
\def\vS{{\bm{S}}}



\DeclareMathAlphabet{\mathsfit}{\encodingdefault}{\sfdefault}{m}{sl}
\SetMathAlphabet{\mathsfit}{bold}{\encodingdefault}{\sfdefault}{bx}{n}











\newcommand{\E}{\mathbb{E}}


\newcommand{\dif}{\mathrm{d}}

\DeclareMathOperator*{\argmin}{arg\,min}

\usepackage{apptools}
\usepackage{natbib}
\newcommand{\cirone}{\text{\ding{172}}}
\newcommand{\cirtwo}{\text{\ding{173}}}
\newcommand{\cirthree}{\text{\ding{174}}}

\usepackage{afterpage}

\icmltitlerunning{Continuous Semi-Implicit Models}

\begin{document}

\twocolumn[
\icmltitle{Continuous Semi-Implicit Models}



\icmlsetsymbol{equal}{*}

\begin{icmlauthorlist}
\icmlauthor{Longlin Yu$^\ast$}{pkumath}
\icmlauthor{Jiajun Zha$^\ast$}{hkustcs}
\icmlauthor{Tong Yang}{fdu,megvii}
\icmlauthor{Tianyu Xie}{pkumath}
\icmlauthor{Xiangyu Zhang}{megvii}
\icmlauthor{S.-H. Gary Chan}{hkustcs}
\icmlauthor{Cheng Zhang}{pkumath,pkucss}
\end{icmlauthorlist}

\icmlaffiliation{pkumath}{School of Mathematical Sciences, Peking University, Beijing, China}
\icmlaffiliation{pkucss}{Center for Statistical Science, Peking University, Beijing, China}
\icmlaffiliation{hkustcs}{Department of Computer Science and Engineering, Hong Kong University of Science and Technology}
\icmlaffiliation{fdu}{School of Computer Science, Fudan University, Shanghai, China}
\icmlaffiliation{megvii}{Megvii Technology Inc., Beijing, China}
\icmlcorrespondingauthor{Cheng Zhang}{chengzhang@math.pku.edu.cn}

\icmlkeywords{Machine Learning, ICML}

\vskip 0.3in
]



\printAffiliationsAndNotice{\icmlEqualContribution} 

\begin{abstract}
    Semi-implicit distributions have shown great promise in variational inference and generative modeling.
    Hierarchical semi-implicit models, which stack multiple semi-implicit layers, enhance the expressiveness of semi-implicit distributions and can be used to accelerate diffusion models given pretrained score networks. 
    However, their sequential training often suffers from slow convergence.
    In this paper, we introduce CoSIM, a continuous semi-implicit model that extends hierarchical semi-implicit models into a continuous framework.
    By incorporating a continuous transition kernel, CoSIM enables efficient, simulation-free training.
    Furthermore, we show that CoSIM achieves consistency with a carefully designed transition kernel, offering a novel approach for multistep distillation of generative models at the distributional level.
    Extensive experiments on image generation demonstrate that CoSIM performs on par or better than existing diffusion model acceleration methods, achieving superior performance on FD-DINOv2.
\end{abstract}

\begin{figure}[!ht]
    \centering
    \includegraphics[width=0.45\textwidth]{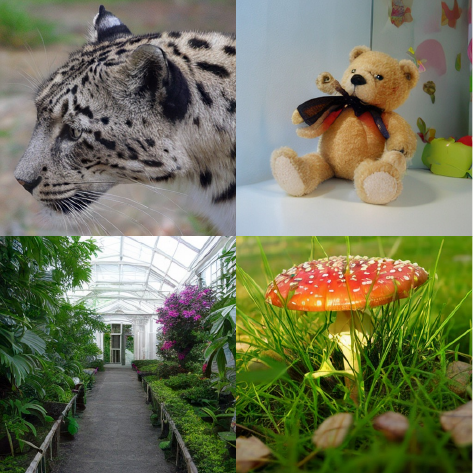}
    \caption{Selected generated images on Imagenet $512\times512$ using L model from Section \ref{sec:cond-generation}.}
\end{figure}
\section{Introduction}

Semi-implicit distributions, which are constructed through the convolution of explicit conditional distributions and implicit mixing distributions, have gained significant traction in variational inference and generative modeling.

Unlike traditional approximating distributions with explicit density forms, semi-implicit distributions enable a more expressive family of variational posteriors, leading to improved approximation accuracy \citep{yin2018, titsias2019, moens2021, 2023semiimplicit, ksivi}.
Beyond variational inference, semi-implicit architectures have been successfully integrated into deep generative models, including variational autoencoders (VAEs)  \citep{VAE, rezende2014} and diffusion models \citep{sohlDickstein2015, song2019Generative, ho2020denoising, ddim, song2020score}.  

In VAEs, the generator typically employs a single-layer semi-implicit construction, pushing forward a simple noise through a conditional factorized Gaussian distribution parameterized by a neural network.
However, despite their widespread adoption, VAE-generated samples often suffer from blurriness and fail to capture high-frequency details in the data distribution \citep{dosovitskiy2016}.
To address these limitations, several VAE variants have been proposed \citep{rezende2016, razavi2019, vahdat2021}. 
A prominent approach is hierarchical VAEs \citep{gregor2015, ranganath2016, sønderby2016, kingma2017, vahdat2021}, which enhances the expressiveness of the single layer vallina VAEs through the introduction of multiple latent variables, enabling multistep generation.
Similarly, diffusion models have emerged as a leading framework for generating high-quality and diverse samples \citep{sohlDickstein2015, song2019Generative, ho2020denoising, ddim, song2020score}.
These models can also be interpreted through the lens of semi-implicit distributions, where a fixed noise injection process is used to generate latent variables instead of learnable posterior distributions as in hierarchical VAEs.

Although multistep generative models are a promising direction for improving sample quality, they often involve a trade-off between generation quality and computational cost. 
Drawing on the connection between diffusion models, hierarchical VAEs, and semi-implicit distributions, recent works have explored distilling diffusion models using tools from semi-implicit variational inference (SIVI) \citep{yin2023one, luo2023diffinstruct, yu2023hierarchical, zhou2024score, zhou2024long}.
These methods can be broadly categorized into two types: (1) one-step deterministic distillation methods, which learn a direct mapping to generate samples from the target distribution \citep{yin2023one, luo2023diffinstruct, zhou2024score, zhou2024long}, and (2) stochastic multi-step models, which generate more diverse samples in fewer steps.
In particular, hierarchical semi-implicit variational inference (HSIVI) falls into the second category, recursively constructing the variational distribution over a fixed sequence of time points \citep{yu2023hierarchical}. 
While HSIVI can generate more diverse samples than one-step models, its discretized design results in slow convergence during training due to the sequential simulation process.

Drawing inspiration from continuous-time diffusion processes \citep{song2020score}, we introduce a Continuous Semi-Implicit Model (CoSIM), which extends hierarchical semi-implicit variational inference (HSIVI) into a continuous framework.
By leveraging a continuous transition kernel, CoSIM generates samples of the mixing layer in a single push-forward operation, significantly accelerating the training process. 
The design of the continuous transition kernel shares some similarities with consistency distillation (CD) \citep{song2023consistency, kim2023consistency, song2024improved, geng2024consistencymodelseasy, heek2024multistepconsistencymodels, lu2024simplifyingstabilizingscalingcontinuoustime}, which learn a consistency function to map noisy distributions back to clean target distributions.
While \citet{salimans2024multistep} explore distilling multi-step diffusion models using moment-matching losses, CoSIM distinguishes itself by employing semi-implicit variational inference (SIVI) training criteria. This approach enables learning the consistency function directly at the distributional level, bypassing the need to recover the reverse process of diffusion models at the sample or moment level. As a result, CoSIM significantly reduces the number of iterations required for distillation.
In experiments, we demonstrate that CoSIM achieves comparable results on the Fréchet Inception Distance (FID) \citep{heusel2017gans} while incurring lower training costs. Furthermore, CoSIM outperforms existing methods on the FD-DINOv2 metric \citep{stein2024exposing}, which employs the larger DINOv2 encoder \citep{oquab2024dinov} instead of the InceptionV3 encoder \citep{szegedy2016rethinking} to better align with human perception.

\section{Background}

\subsection{Diffusion Models}
Diffusion models \citep{sohlDickstein2015, song2019Generative, ho2020denoising, ddim, song2020score} perturb the clean data to noise in the forward process and then generate the data from noise by multiple denoising steps in the backward process.
The forward process can be described by a stochastic differential equation (SDE)
\begin{equation}\label{eq:sde}
  \dif \vx_s = \vf(\vx_s;s) \dif s + g(s)\dif \mathbf{B}_s,
\end{equation}
where $s \in [0,T]$, $T > 0$ is a fixed terminating time, $\mathbf{B}_s$ is a standard Brownian motion, and $\vf(\vx_s;s)$ and $g(s)$ are the drift and diffusion coefficients respectively.
The starting $\vx_0 \sim p(\cdot; 0)$ is the data distribution. 
Denote the density law of the forward process as $\{p(\cdot; s)\}_{s\in[0,T]}$.
Typically, the SDE in (\ref{eq:sde}) is designed as a variance preserving (VP) or variance exploding (VE) scheme \citep{song2020score, karras2022elucidating}, and the samples of $p(\vx_t|\vx_0;t,0)$ can be reparameterized as  
\begin{equation}\label{eq:cond-sample}
  \vx_t = a(t) \vx_0 + \sigma(t) \vepsilon, \quad t\in [0,T], 
\end{equation}
where $\vepsilon$ is a Gaussian noise, $a(t)$ is non-increasing function, and $\sigma(t)$ is a monotonically increasing function.
In the backward process, one can run the following reversed-time SDE from $T$ to $0$ to generate the samples of $p(\vx_0;0)$
\begin{equation*}\label{reverse-sde}
  \dif \vx_s = [\vf(\vx_s;s) - g^2(s)\nabla\log p(\vx_s; s)] \dif s + g(s)\dif \bar{\mathbf{B}}_s,
\end{equation*}
where $\nabla\log p(\vx_s; s)$ is the score function of $p(\vx_s;s)$, $\bar{\mathbf{B}}_s$ is a standard Brownian motion. 
The score function $\nabla\log p(\vx_s; s)$ is often estimated with a score model $\vS_\theta(\vx_s;s) \approx \nabla \log p(\vx_s; s)$ via denoising score matching~\citep{Vincent2011,song2020score}.

\subsection{Semi-Implicit Models and Diffusion Distillation}\label{sec:sim_distillation}
Semi-implicit is a mixture distribution expressed as $q_\phi(\vx) = \int q_\phi(\vx|\vz)q(\vz)\dif \vz$, which can be used to approximate the target distribution via variational inference \citep{yin2018,titsias2019}.
The conditional layer $q_\phi(\vx|\vz)$ is required to be explicit but the mixing layer $q(\vz)$ can be implicit, where $\phi$ is the variational parameter. 
Recent works have explored diffusion model distillation within the semi-implicit framework \citep{wang2023diffusiongan}. 
These methods are primarily distinguished by their training objectives: those utilizing density-based divergences (e.g., JSD and KL divergence) \citep{luo2023diffinstruct, wang2023prolificdreamer, yin2023one}, and those employing score-based divergences (e.g., Fisher divergence) \citep{yu2023hierarchical,zhou2024score}. Following SiD \citep{zhou2024score}, which demonstrated superior performance and faster convergence in terms of FID results than Diff-Instruct \citep{luo2023diffinstruct}, we adopt the score-based training objective in this work.
\paragraph{Hierarchical Semi-Implicit Variational Inference} \citet{yu2023hierarchical} propose a hierarchical semi-implicit structure recursively defined from the top layer $k=K-1$ to the base layer $k=0$ for multistep diffusion distillation. 
This structure employs the conditional layer $q_{\phi}(\vx_k|\vx_{k+1};t_k)$ and the variational prior $q(\vx_T;T)$, defined by
\begin{equation}\label{hiervf}
\small
  \quad q_{\phi}(\vx_{k};t_k) = \int q_{\phi}(\vx_{k}|\vx_{k+1}; t_k) q_{\phi}(\vx_{k+1}; t_{k+1})\mathrm{d} \vx_{k+1},
\end{equation}
where $k=0,1, \ldots, K-1$, $0 < t_1 <\ldots <t_K=T$ and $q_{\phi}(\cdot;t_K):=q(\cdot; T)$. 
Moreover, the forward process (\ref{eq:sde}) of diffusion models naturally provides a sequence of intermediate bridge distributions $\{p(\vx; t_k)\}_{k=0}^{K-1}$, 
which can be combined with (\ref{hiervf}) for diffusion models distillation.
In the training procedure, \citet{yu2023hierarchical} introduce a joint training scheme to minimize a weighted sum of the semi-implicit variational inference (SIVI) objectives
\begin{equation}\label{HSIVI-f-loss}
\footnotesize
\mathcal{L}_{\textrm{HSIVI-}f}(\phi) = \sum_{k=0}^{K-1}\beta(k) \mathcal{L}_{\textrm{SIVI-}f}\left(q_{\phi}(\cdot;t_k)\| p(\cdot; t_k)\right),
\end{equation}
where $\beta(k): \{0,\ldots,T-1\}\to\mathbb{R}_{+}$ is a positive weighting function and $f$ represents a distance criterion.
In the application to diffusion model acceleration, $f$ can be chosen as the Fisher divergence and the resulting HSIVI-SM optimizes $q_{\phi}(\vx_t; t)$ via
\begin{align}\label{eq:SIVISMloss}
  \min_{\phi} &\max_{\vv(\cdot;t_k)} \, \resizebox{.84\columnwidth}{!}{$\mathcal{L}_{\textrm{SIVI-SM}}(\phi) = \E_{q_{\phi}(\vx_{t_k}; t_k)} \Big[\vv(\vx_{t_k}; t_k)^T\left[\nabla\log p(\vx_{t_k}; t_k) \right.$} \nonumber\\
  &\!\!\resizebox{.80\columnwidth}{!}{$-\left. \nabla_{\vx_{t_k}}\log q_{\phi}(\vx_{t_k}|\vx_{t_{k+1}};t_k)\right]- \frac12 \|\vv(\vx_{t_k}; t_k)\|_2^2\Big],$}
\end{align}
where $\vv(\vx_{t_k}; t_k)$ is an auxiliary vector-valued function and $q_{\phi}(\vx_{t_k},\vx_{t_{k+1}}) = q_{\phi}(\vx_{t_k}|\vx_{t_{k+1}};t_k) q_\phi(\vx_{t_{k+1}};t_{k+1})$.
\paragraph{Score Identity Distillation}
For the optimization of $\phi$ in (\ref{eq:SIVISMloss}), the coefficient $\frac12$ is not a unique choice. 
\citet{zhou2024score} introduced Score Identity Distillation (SiD), a one-step distillation method for diffusion models. 
Assuming the variational semi-implicit distribution $\tilde{q}_{\phi}(\vx_t,\vz;t) = p(\vx_t|\bm{G}_\phi(\vz);t,0) \tilde{q}(\vz)$, where $\bm{G}_\phi$ is a learnable neural network generator mapping from a simple distribution $\tilde{q}(\vz)$ to data distribution and the conditional layer $p(\cdot|\cdot;t,0)$ follows the definition in (\ref{eq:cond-sample}).
SiD employs a fused loss function, which can be viewed as the training objective in SIVI objectives
\begin{equation}\label{eq:SiD}
\footnotesize
\begin{aligned}[b]
  \min_{\phi} \quad & \mathcal{L}_{\textrm{SiD}}(\phi) = \E_{\tilde{q}_{\phi}(\vx_t,\vz; t)} \Big[\vv(\vx_t; t)^T \left[ \nabla \log p(\vx_s; s) - \right.\\
  &\left. \nabla_{\vx_t}\log \tilde{q}_{\phi}(\vx_t|\vz;t) \right] - \alpha \|\vv(\vx_t, t)\|_2^2 \Big],
\end{aligned}
\end{equation}
where $\alpha \in \mathbb{R}^+$ is a given hyperparameter used to balance the cross term and squared term in (\ref{eq:SiD}). 
The lower-level optimization problem for $\vv(\vx_t;t)$ remains consistent with the maximization of $\mathcal{L}_{\textrm{SIVI-SM}}(\phi)$.
Empirically, SiD with $\alpha > \frac{1}{2}$ demonstrates significantly better performance than both the baseline of $\alpha=\frac12$ and density-based distillation methods in one-step image generation tasks \citep{zhou2024score}.

\section{Continuous Semi-Implicit Models}
\label{sec: CSIM}
Unlike one-step generation models, stochastic multi-step approaches such as HSIVI provide a systematic framework for progressive image quality enhancement while preserving sample diversity. However, HSIVI suffers from slow convergence in its training process.  
Inspired by the evolution of noise conditional score network models (NCSN) \citep{song2019Generative} to continuous-time score-based generative models \citep{song2020score}, extending the sequential training to a continuous-time training framework promises to enhance the training efficiency.
Following \citet{yu2023hierarchical}, we extend the hierarchical semi-implicit model with a fixed number of layers to a continuous framework.
Our goal is to learn a transition kernel $q_{\textrm{trans}}(\vx_s|\vx_t;s,t)$ that maps the distribution $p(\vx_t;t)$ to $p(\vx_s;s)$ for $0\le s<t\le T$ as a continuous generalization of the conditional layer $q(\vx_{t_k}|\vx_{t_{k+1}};t_k)$ in HSIVI.
In the context of diffusion models, we assume $p(\cdot; t)$ follows the density of the forward process of (\ref{eq:sde}). 
\subsection{Continuous Transition Kernel}
To construct the density of variational distribution at timestep $s$, we denote the marginal distribution $q(\vx_s;s, t)$ as follows
\begin{equation}\label{eq:marginal}
  q(\vx_s; s, t) = \int q_{\textrm{trans}}(\vx_s|\vx_t;s,t) q_{\textrm{mix}}(\vx_t;t) \dif \vx_t,
\end{equation}
where $q_{\textrm{mix}}(\vx_t;t)$ is chosen as a role of the mixing distribution in semi-implicit variational distribution. 
Within the paradigm of hierarchical semi-implicit distribution \citep{yu2023hierarchical}, $q_{\textrm{mix}}(\vx_{t_k};t_k)$ is obtained by progressively sampling from the conditional layers $\{q_\phi(\vx_{t_i}|\vx_{t_{i+1}};t_i)\}_{i\ge k}$ as indicated in (\ref{hiervf}). 
Instead, we construct $q_{\textrm{mix}}(\cdot;t)$ through a single push-forward operation using the conditional distribution $p(\cdot|\cdot;t,0)$ defined in (\ref{eq:cond-sample})
\begin{equation}\label{eq:q-mix}
  q_{\textrm{mix}}(\vx_t;t) = \int p(\vx_t|\vx_0;t,0)p(\vx_0;0)\dif \vx_0,\\
\end{equation}
which allows us to sample $\vx_t \sim q_{\textrm{mix}}(\vx_t;t)$ instantaneously. 
With the continuous timesteps $t$ and $s$, we can train the transition kernel $q_{\textrm{trans}}(\vx_s|\vx_t;s,t)$ via the following continuous generalization of (\ref{HSIVI-f-loss})
\begin{equation}\label{eq:cosim-loss}
\resizebox{.90\columnwidth}{!}{$
\mathcal{L}_{\textrm{CSI-}f}(q) = \int_0^T \pi(s,t) \mathcal{L}_{\textrm{SIVI-}f}\left(q(\vx_s;s, t)\|p(\vx_s;s)\right) \dif t \dif s,$}
\end{equation}
where $\pi(s,t) = \pi(s)\pi(t|s)$ is a joint time schedule. 
A detailed discussion of the design principles for $\pi(s)\pi(t|s)$ is presented in Appendix \ref{sec:prac_impl}.
Note that the marginal distribution $q(\vx_s;s, t)$ in (\ref{eq:marginal}) depends on $t$. 
Thus models of the continuous transition kernel $q_{\textrm{trans}}(\vx_s|\vx_t;s,t)$ will lead to consistency, as it convert $q_{\textrm{mix}}(\vx_t;t)$ to the same distribution $p(\vx_s;s)$ for all $t\in (s, T ]$.
We call it a continuous semi-implicit model (CoSIM).

\subsection{The Training Objectives}
\label{section:sivi}
Let $q_\phi(\vx_s;s,t)$ be the corresponding variational distribution for the parameterized transition kernel $q_\phi(\vx_s|\vx_t; s,t)$.
We adopt the score matching objective $\mathcal{L}_{\textrm{SIVI-SM}}$ introduced in Section \ref{sec:sim_distillation} for training.
More specifically, we parameterize the auxiliary vector-valued function as $\vv_\psi(\vx_s;s, t) := \nabla \log p(\vx_s;s) - \vf_\psi(\vx_s;s, t)$ and reformulate the optimization into two stages
\begin{equation}\label{eq:two-stage}
\footnotesize
\begin{aligned}
  &\min_{\phi}\ \E_{q_{\phi}(\vx_s,\vx_t; s, t)} \Big[\vv_\psi(\vx_s; s, t)^T\left[\nabla \log p(\vx_s; s) \right.-\\
  &\qquad \left.\nabla_{\vx_t}\log q_{\phi}(\vx_s|\vx_t;s,t)\right]-\frac12 \|\vv_\psi(\vx_s; s, t)\|_2^2\Big],\\
  &\min_{\psi}\ \E_{q_{\phi}(\vx_s,\vx_t; s, t)} \|\vf_\psi(\vx_s; s, t) - \log q_{\phi}(\vx_s|\vx_t;s,t)\|_2^2,
\end{aligned}
\end{equation}
where $q_{\phi}(\vx_s,\vx_t; s, t) = q_{\textrm{trans}}(\vx_s|\vx_t;s,t) q_{\textrm{mix}}(\vx_t;t)$. 

\paragraph{Shifting $f_\psi$ by Regularization} 
The Nash-equilibrium of the two-stage optimization problem (\ref{eq:two-stage}) is given by
\begin{align}
  \vf_{\psi^*}(\vx_s;s,t) = &\nabla \log q_{\phi^*}(\vx_s; s, t), \nonumber\\
  \phi^* \in\argmin_{\phi} & \{\E_{q_\phi(\vx_s;s,t)}\|\bm{\delta}_\phi(\vx_s;s,t) \|_2^2\},\label{eq:nash-equilibrium}\\
  \bm{\delta}_\phi(\vx_s;s,t):=&\vS_{\theta^*}(\vx_s;s)-\nabla \log q_\phi(\vx_s;s,t)\nonumber.
\end{align}
A natural initialization, therefore, would be to use the pretrained score network $\vf_{\psi_0}(\vx_s;s,t)\approx \vS_{\theta^*}(\vx_s;s)$. 
During training, the optimal $\vf_\psi$ is given by 
$
  \vf_{\psi^*(\phi)}(\vx_s; s, t) = \nabla \log q_\phi(\vx_s; s, t). 
$
When $\nabla \log q_\phi(\vx_s; s, t)$ deviates significantly from the target score model $\vS_{\theta^*}(\vx_s; s)$, this initialization strategy becomes inefficient for the second-stage optimization in (\ref{eq:two-stage}).
To address this mismatch, we follow \citet{salimans2024multistep} and adopt a regularization strategy
\begin{align}
  \min_{\psi} \quad &\E_{q_{\phi}(\vx_s,\vx_t; s, t)} \left[\|\vf_\psi(\vx_s; s, t) - \log q_{\phi}(\vx_s|\vx_t;s,t)\|_2^2 \right.\nonumber\\
  &\left.+ \lambda \|\vf_\psi(\vx_s; s, t) - \vS_{\theta^*}(\vx_s; s)\|_2^2\right], \label{eq:regularization}
\end{align}
where $\lambda \geq 0$ is a hyperparameter controlling the strength of regularization.
It can be shown that the optimal $f_{\tilde{\psi}^*(\phi)}$ of (\ref{eq:regularization}) is shifted towards $\vS_{\theta^*}$ as follows
\begin{footnotesize}
\[
  \vf_{\tilde{\psi}^*(\phi)}(\vx_s; s, t) = \frac{1}{1+\lambda} \nabla \log q_\phi(\vx_s; s, t) + \frac{\lambda}{1+\lambda} \vS_{\theta^*}(\vx_s; s).
\]
\end{footnotesize}Although this additional regularization introduces bias into the second optimization in (\ref{eq:two-stage}), we found that this bias in $\psi$ does not transfer to $\phi$, and the Nash equilibrium of $\phi$ remains consistent with (\ref{eq:nash-equilibrium}).
We provide a comprehensive statement in the theorem \ref{thm:equilibrium}.

\paragraph{Reformulation of Fused Loss}
In the fused training objective (\ref{eq:SiD}) of SiD, there is a mismatch in the two-stages training objective (\ref{eq:two-stage}) when $\alpha$ is not $\frac12$.
Furthermore, when $\alpha > 1$, the fused loss function exhibits pathological behavior as it becomes negative when the inner optimization of $\vv(\cdot;t)$ converges to its optimal solution
\begin{equation*}\label{eq:SiD-mismatch}
\resizebox{.98\columnwidth}{!}{$
\hat{\mathcal{L}}_{\textrm{SiD}}(\phi) = \E_{\tilde{q}_{\phi}(\vx_t;t)} (1-\alpha) \|\nabla \log p(\vx_s; s) - \nabla\log \tilde{q}_{\phi}(\vx_t;t)\|_2^2,$}
\end{equation*}
However, within the framework of shifting $\vf_\psi$, we can reformulate the SiD training objective to achieve unbiasedness while eliminating the aforementioned pathological behavior.
Consider the scaled Fisher divergence $\textrm{D}_\alpha(p(\cdot;s)\| q_{\phi}(\cdot;s,t))$ defined as
\begin{align}\label{eq:gfi}
  &\textrm{D}_\alpha(q_{\phi}(\cdot;s,t) \|p(\cdot;s)) := \E_{q_{\phi}(\vx_s;t,s)}\frac{1}{4\alpha}\|\tilde{\bm{\delta}}_\phi(\vx_s;s,t)\|_2^2,\nonumber\\
  &\tilde{\bm{\delta}}_\phi(\vx_s;s,t):=\nabla\log p(\vx_s;s) - \nabla\log q_{\phi}(\vx_s;s,t),
\end{align}
Then we can rewrite the above $\textrm{D}_\alpha(p(\cdot;s)\| q_{\phi}(\cdot;s,t))$ as the maximum value of the optimization problem 
\begin{align}\label{eq:max-gfi}
  \max_{\vv_\psi} &\quad \E_{q_{\phi}(\vx_s;s,t)}\Big[\vv_\psi(\vx_s;s,t)^T \left[ \nabla\log p(\vx_s;s) -\right.\nonumber\\
  &\left. \nabla\log q_{\phi}(\vx_s;s,t) \right] - \alpha\|\vv_\psi(\vx_s;s,t)\|_2^2\Big],
\end{align}
Similarly, we utilize (\ref{eq:max-gfi}) to reformulate the (\ref{eq:gfi}).
\begin{theorem}\label{thm:equilibrium}
  Optimization of $\textrm{D}_\alpha\left(p(\cdot;s)\| q_{\phi}(\cdot;s,t)\right)$ is equivalent to the two-stages optimization
  \begin{align}\label{eq:generalized-two-stage}
    \min_{\phi} &\resizebox{.80\columnwidth}{!}{$\quad \E_{q_{\phi}(\vx_s,\vx_t; s, t)} \left[\vv_\psi(\vx_s; s, t)^T\left[\nabla \log p(\vx_s; s) -\right.\right.$}\nonumber\\
     &\!\!\resizebox{.75\columnwidth}{!}{$\left.\left.\nabla_{\vx_s}\log q_{\phi}(\vx_s|\vx_t;s,t)\right]-\alpha\|\vv_\psi(\vx_s; s, t)\|_2^2\right], $}\\
    \min_{\psi} &\resizebox{.95\columnwidth}{!}{$\quad \E_{q_{\phi}(\vx_s,\vx_t; s, t)} \left[\vv_\psi(\vx_s; s, t)^T\left[\nabla_{\vx_s}\log q_{\phi}(\vx_s|\vx_t;s,t)-\right.\right.$}\nonumber\\
    &\!\!\resizebox{.75\columnwidth}{!}{$\left.\left.\nabla \log p(\vx_s; s)\right]+(1+\lambda)\alpha\|\vv_\psi(\vx_s; s, t)\|_2^2\right],$}\nonumber
  \end{align}
  where $\lambda \ge 0$ is a given regularization strength hyperparameter. 
  The optimal $\vf_{\psi^*(\phi)}(\vx_s; s, t)$ of the second-stage optimization is given by 
  \begin{equation}
    \footnotesize
    \vf_{\psi^*}(\phi)(\cdot; s, t) = \beta \nabla \log q_\phi(\cdot; s, t) + (1-\beta) \nabla \log p (\cdot; s),
  \end{equation}
  where $\beta = \frac{1}{2\alpha(1+\lambda)}$. 
  The equilibrium of $\phi$ remains consistent with (\ref{eq:nash-equilibrium}).
\end{theorem}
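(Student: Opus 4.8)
The plan is to reduce the statement to three ingredients: a semi-implicit \emph{score identity}, a pointwise optimization over the auxiliary field $\vv_\psi$, and a back-substitution that collapses the first-stage objective to a positive multiple of $\textrm{D}_\alpha$. The score identity is the workhorse. Since $q_\phi(\vx_s;s,t)=\int q_{\textrm{trans}}(\vx_s|\vx_t;s,t)q_{\textrm{mix}}(\vx_t;t)\dif\vx_t$ with $q_{\textrm{mix}}$ independent of $\phi$, differentiating under the integral sign gives $\nabla_{\vx_s}\log q_\phi(\vx_s;s,t)=\E_{q_\phi(\vx_t|\vx_s;s,t)}[\nabla_{\vx_s}\log q_\phi(\vx_s|\vx_t;s,t)]$, so that for any fixed field $\vv_\psi$, $\E_{q_\phi(\vx_s;s,t)}[\vv_\psi^{\top}\nabla_{\vx_s}\log q_\phi(\vx_s;s,t)]=\E_{q_\phi(\vx_s,\vx_t;s,t)}[\vv_\psi^{\top}\nabla_{\vx_s}\log q_\phi(\vx_s|\vx_t;s,t)]$. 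This lets me pass freely between the intractable marginal score in (\ref{eq:max-gfi}) and the tractable conditional score in (\ref{eq:generalized-two-stage}).

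First I would check that $\textrm{D}_\alpha(q_\phi(\cdot;s,t)\|p(\cdot;s))$ equals the maximum in (\ref{eq:max-gfi}): the integrand there is strictly concave quadratic in $\vv_\psi(\vx_s;s,t)$, maximized pointwise at $\vv_\psi^{\star}=\tfrac{1}{2\alpha}\tilde{\bm{\delta}}_\phi$, and substituting back reproduces $\E_{q_\phi}[\tfrac{1}{4\alpha}\|\tilde{\bm{\delta}}_\phi\|_2^2]$, i.e.\ (\ref{eq:gfi}). Next, for the second-stage formula I would fix $\phi$ and, using the score identity to replace $\nabla_{\vx_s}\log q_\phi(\vx_s|\vx_t;s,t)$ by $\nabla\log q_\phi(\vx_s;s,t)$ inside the expectation, observe that the second-stage objective of (\ref{eq:generalized-two-stage}) is pointwise strictly convex quadratic in $\vv_\psi$ with leading coefficient $(1+\lambda)\alpha>0$; its unique minimizer is $\vv_{\psi^*(\phi)}=\tfrac{1}{2\alpha(1+\lambda)}\tilde{\bm{\delta}}_\phi$, hence $\vf_{\psi^*(\phi)}=\nabla\log p-\vv_{\psi^*(\phi)}=\beta\nabla\log q_\phi(\cdot;s,t)+(1-\beta)\nabla\log p(\cdot;s)$ with $\beta=\tfrac{1}{2\alpha(1+\lambda)}$, which is the claimed formula (and reduces to $\vf_{\psi^*}=\nabla\log q_\phi$ when $\lambda=0,\alpha=\tfrac12$).

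For the equilibrium of $\phi$, I would substitute $\vv_{\psi^*(\phi)}=\beta\tilde{\bm{\delta}}_\phi$ into the first-stage objective of (\ref{eq:generalized-two-stage}) and apply the score identity once more; the cross and square terms collapse to $\E_{q_\phi}[\beta\|\tilde{\bm{\delta}}_\phi\|_2^2-\alpha\beta^2\|\tilde{\bm{\delta}}_\phi\|_2^2]=\beta(1-\alpha\beta)\,\E_{q_\phi}\|\tilde{\bm{\delta}}_\phi\|_2^2=\tfrac{2\lambda+1}{(1+\lambda)^2}\,\textrm{D}_\alpha(q_\phi(\cdot;s,t)\|p(\cdot;s))$. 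Since $\tfrac{2\lambda+1}{(1+\lambda)^2}>0$ for every $\lambda\ge0$, the regularization only rescales this reduced objective by a positive constant, so minimizing it over $\phi$ is the same as minimizing $\textrm{D}_\alpha$, i.e.\ minimizing $\E_{q_\phi}\|\tilde{\bm{\delta}}_\phi\|_2^2$; and since the pretrained network satisfies $\vS_{\theta^*}\approx\nabla\log p$ so that $\tilde{\bm{\delta}}_\phi\approx\bm{\delta}_\phi$, this is precisely (\ref{eq:nash-equilibrium}). In particular the $\lambda$-induced bias in $\psi$ only scales $\vv_\psi$ by $1/(1+\lambda)$ and leaves the location of the $\phi$-equilibrium unchanged.

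The step I expect to be the main obstacle is making this last reduction fully rigorous: in the alternating scheme the first stage is optimized with $\vv_\psi$ \emph{frozen} rather than re-substituted as $\vv_{\psi^*(\phi)}$, so one must argue that at a joint stationary point the extra $\phi$-sensitivity that would propagate through $\vv_\psi$ is annihilated. I would handle this with an envelope/Danskin argument invoking the second-stage first-order condition, and separately verify that the consistency solution $q_\phi(\vx_s;s,t)=p(\vx_s;s)$ — for which $\tilde{\bm{\delta}}_\phi\equiv0$, $\vv_{\psi^*}\equiv0$, and the first-stage objective vanishes identically — is a fixed point of the updates. Everything else is bookkeeping with the score identity and completing the square.
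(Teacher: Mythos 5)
Your proposal is correct and follows essentially the same route as the paper's proof: the score identity to swap the marginal score for the conditional score, pointwise quadratic optimization of $\vv_\psi$ yielding $\vv_{\psi^*(\phi)}=\tfrac{1}{2\alpha(1+\lambda)}\tilde{\bm{\delta}}_\phi$, and back-substitution showing the reduced first-stage objective equals $\tfrac{1+2\lambda}{4\alpha(1+\lambda)^2}\E_{q_\phi}\|\tilde{\bm{\delta}}_\phi\|_2^2$, a positive multiple of $\textrm{D}_\alpha$ for all $\lambda\ge 0$. The only difference is presentational — the paper first proves a general version for an arbitrary strongly smooth convex $h$ via Legendre duality and then specializes to $h(\vx)=\tfrac{1}{4\alpha}\|\vx\|_2^2$, whereas you complete the square directly in the quadratic case; the Danskin/envelope subtlety you flag is handled in the paper exactly as you propose, by substituting the exact inner minimizer.
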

It is straightforward to observe that (\ref{eq:generalized-two-stage}) is the same as the SiD loss in (\ref{eq:SiD}).
If $\nabla\log p(\vx_s;s)$ is approximated by $\vS_{\theta^*}(\vx_s;s)$, 
the optimal $\vf_{\tilde{\psi}^*}$ is given by
  \begin{equation}\label{eq:shifted-f}
  \footnotesize
    \vf_{\tilde{\psi}^*}(\cdot; s, t) = \beta \nabla \log q_\phi(\cdot; s, t) + (1-\beta) \nabla \log p (\cdot; s).
  \end{equation}
(\ref{eq:shifted-f}) indicates that when $\alpha > \frac{1}{2(1+\lambda)} $, then $\beta < 1$. 
The optimal $\vf_{\tilde{\psi}^*(\phi)}(\vx_s; s, t)$ in the second-stage optimization also shifts towards $\vS_{\theta^*}(\vx_s; s)$.
This alignment facilitates learning when $\vf_\psi$ is initialized as $\vS_{\theta^*}$. 
The proof of Theorem \ref{thm:equilibrium} and its general version are shown in Appendix \ref{app:pf-equilibrium}.
The training process of CoSIM is summarized in Algorithm \ref{algo:training-procedure} in Appendix \ref{sec:prac_impl}.

\subsection{Parameterization of CoSIM}
\label{sec: path to consistency}
Consistency model \citep{song2023consistency} is a family of generative models that learn a consistency function $\bm{G}_\phi(\vx_t, t)$ to map the samples of distribution $p(\vx_t; t)$ back to $p(\vx_0;0)$. 
Once the consistency function is well-trained, the consistency model can generate samples using multiple steps by iterating with $s<t$ as follows
\begin{equation}\label{eq:consistency-param}
  \vx_s = a(s) \bm{G}_\phi(\vx_t, t) + \sigma(s) \vepsilon,
\end{equation} 
which approximates $\vx_0$ in (\ref{eq:cond-sample}) by $\bm{G}_\phi(\vx_t,t)$, and $\vepsilon \sim \mathcal{N}(\mathbf{0},\mathbf{I})$. 
Intuitively, this provides a parameterized form of the continuous transition kernel $q_\textrm{trans}(\vx_s|\vx_t;s,t)$.  
We hereafter adopt this setting for diffusion model distillation.
One can expect that once $\bm{G}_\phi(\vx_t, t)$ is perfectly trained with $\mathcal{L}_{\textrm{CSI-}f}(\phi)$, then $\bm{G}_\phi(\vx_t,t)$ will also map the samples of distribution $p(\vx_t; t)$ back to $p(\vx_0;0)$.
We present this result in Proposition \ref{prop:consistency}.
See a detailed proof of Proposition \ref{prop:consistency} in Appendix \ref{app:consistency}.
\begin{proposition}[Consistent Similarity]\label{prop:consistency}
  Let the continuous transition kernel $q_{\textrm{trans}}(\vx_s|\vx_t;s,t)$ be parameterized as 
  \begin{equation*}
  \footnotesize
  \vx_s = [a(s) \bm{G}_\phi(\vx_t, t) + \sigma(s)\vepsilon]\sim q_\phi(\vx_s|\vx_t;s,t),
  \end{equation*}
  where $0 < s < t \le T$, $\vepsilon \sim \mathcal{N}(\bm{0}, \bm{I})$, and $a(\cdot), \sigma(\cdot)$ are defined in (\ref{eq:cond-sample}). 
  Then the optimal $\bm{G}_{\phi}(\cdot, t)$ obtained from the two-stage alternating optimization problem (\ref{eq:generalized-two-stage}) also serves as a consistency function.
\end{proposition}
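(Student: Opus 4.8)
The plan is to run the argument in three moves: first use Theorem~\ref{thm:equilibrium} to identify the optimal variational marginal with the true diffusion marginal, then unfold the Gaussian parameterization of the transition kernel to turn this into a statement about the law of $\bm{G}_\phi$, and finally deconvolve the Gaussian to read off the consistency property.

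First I would invoke Theorem~\ref{thm:equilibrium}: optimizing the two-stage problem (\ref{eq:generalized-two-stage}) is equivalent to minimizing $\mathcal{L}_{\textrm{CSI-}f}(q)$ built from $\textrm{D}_\alpha(q_\phi(\cdot;s,t)\|p(\cdot;s))$, and its $\phi$-equilibrium is (\ref{eq:nash-equilibrium}), i.e. $\phi^\ast$ minimizes $\int_0^T \pi(s,t)\,\E_{q_\phi(\vx_s;s,t)}\|\bm{\delta}_\phi(\vx_s;s,t)\|_2^2\,\dif t\,\dif s$ with $\bm{\delta}_\phi=\vS_{\theta^\ast}(\vx_s;s)-\nabla\log q_\phi(\vx_s;s,t)$. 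Under the standing idealization (exact pretrained score $\vS_{\theta^\ast}(\cdot;s)=\nabla\log p(\cdot;s)$, a generator family expressive enough for perfect training, and $\pi$ of full support on $\{0<s<t\le T\}$), the minimal value is $0$: it is attained, for instance, by any $\bm{G}_\phi$ realizing the probability-flow ODE map $\Phi_{t\to 0}$, since then $a(s)\bm{G}_\phi(\vx_t,t)+\sigma(s)\vepsilon$ with $\vx_t\sim p(\cdot;t)$ has exactly the law (\ref{eq:cond-sample}) of $\vx_s\sim p(\cdot;s)$. Hence at $\phi^\ast$ we get $\nabla\log q_{\phi^\ast}(\vx_s;s,t)=\nabla\log p(\vx_s;s)$ for all admissible $(s,t)$ and all $\vx_s$ in the common support. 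Because $\sigma(s)>0$, the parameterization makes $q_{\phi^\ast}(\cdot;s,t)$ a Gaussian mixture, hence a strictly positive smooth density on all of $\R^d$, and $p(\cdot;s)$ likewise has full support by (\ref{eq:cond-sample}); integrating the gradient identity over $\R^d$ shows $\log q_{\phi^\ast}-\log p$ is constant, so $q_{\phi^\ast}(\vx_s;s,t)=p(\vx_s;s)$.

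The third step is the deconvolution. I would first note $q_{\textrm{mix}}(\vx_t;t)=p(\vx_t;t)$, since definition (\ref{eq:q-mix}) is precisely the marginal of the forward process (\ref{eq:sde}). Let $\mu_t$ denote the pushforward of $p(\cdot;t)$ under $\vx_t\mapsto\bm{G}_{\phi^\ast}(\vx_t,t)$. Then by (\ref{eq:marginal}) together with the parameterization, $q_{\phi^\ast}(\cdot;s,t)$ is the convolution of the $a(s)$-dilation of $\mu_t$ with $\mathcal{N}(\bm{0},\sigma(s)^2\bm{I})$, whereas $p(\cdot;s)$ is the convolution of the $a(s)$-dilation of $p(\cdot;0)$ with the same Gaussian. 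Passing to Fourier transforms and using that the characteristic function of a nondegenerate Gaussian never vanishes, the Gaussian factor cancels; since $a(s)\neq 0$ the dilation is invertible, so $\mu_t=p(\cdot;0)$. That is, $(\bm{G}_{\phi^\ast}(\cdot,t))_{\#}\,p(\cdot;t)=p(\cdot;0)$ for every $t$ admitting some $s\in(0,t)$ in the support of $\pi$, which is exactly the claim that $\bm{G}_{\phi^\ast}(\cdot,t)$ is a consistency function at the distributional level.

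The main obstacle is Step three: making the ``cancel the Gaussian factor'' step rigorous via injectivity of Gaussian mollification on probability measures (characteristic functions), and, relatedly, stating the hypotheses cleanly — exactness of $\vS_{\theta^\ast}$, expressiveness of the $\bm{G}_\phi$ family so that the score-matching infimum is genuinely $0$, full support of the time schedule $\pi$, and $a(s)\neq 0$ over the relevant range. Given Theorem~\ref{thm:equilibrium}, the score-matching reductions in the first two steps are routine.
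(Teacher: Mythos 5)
Your proposal is correct and follows essentially the same route as the paper's proof: reduce the two-stage problem to the (scaled) Fisher divergence via Theorem~\ref{thm:equilibrium}, use the strict positivity of the Gaussian-smoothed marginal $q_{\phi^\ast}(\cdot;s,t)$ to upgrade the score identity to $q_{\phi^\ast}(\cdot;s,t)=p(\cdot;s)$, and then deconvolve the common Gaussian factor via non-vanishing characteristic functions to conclude $(\bm{G}_{\phi^\ast}(\cdot,t))_{\#}\,p(\cdot;t)=p(\cdot;0)$. Your explicit flagging of the needed idealizations (exact score, expressive generator family so the infimum is zero, $a(s)\neq 0$) is a mild sharpening of hypotheses the paper leaves implicit, not a different argument.
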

With a well-trained continuous transition kernel $q_{\textrm{trans}}(\vx_s|\vx_t;s,t)$, we can iteratively sample from it over multiple steps to ultimately generate samples from $p(\vx_0)$. 
To select the sequence of time points, we use the sampling scheme of EDM \citep{karras2022elucidating} as an initial choice and set a scale parameter for tuning with a greedy algorithm. 
We summarize the multistep sampling procedure in Algorithm \ref{algo:sampling-procedure}.

\begin{algorithm}[tb]
  \caption{Inference of Continuous Semi-Implicit Models}
  \label{algo:sampling-procedure}
\begin{algorithmic}
  \STATE {\bfseries Input:} Continuous transition kernel $q_{\textrm{trans}}(\vx_s|\vx_t;s,t)$ and a sequence of time points $T = t_0 > t_1 > \cdots > t_k = 0$.
  \STATE {\bfseries Output:} The estimated samples of $p(\vx_0)$.
  \STATE Initialize $\vx_0 \sim p(\vx_T;T)$
  \FOR{$n=1$ {\bfseries to} $k$}
      \STATE Sample $\vx_n \sim q_{\textrm{trans}}(\cdot|\vx_{n-1};t_n,t_{n-1})$
  \ENDFOR
  \STATE {\bfseries Output:} $\vx_k$
\end{algorithmic}
\end{algorithm}

\paragraph{Error Bound of CoSIM}
Building on the parameterized setting (\ref{eq:consistency-param}) of $q_{\textrm{trans}}(\vx_s|\vx_t;s,t)$, we denote the variational distribution family $\mathcal{Q} = \Big\{q_\phi | q_\phi(\vx_s;s,t) = \int q_\phi(\vx_s|\vx_t;s,t) q_{\textrm{mix}}(\vx_t)\dif\vx_t, \phi \in \Phi\Big\}$,
where $\Phi$ is the feasible domain of neural network parameters of $\bm{G}_\phi$ and $q_{\textrm{mix}}$ follows (\ref{eq:q-mix}).
Then we can define the optimal variational distribution in $\mathcal{Q}$ as 
\begin{equation}
  q_{\phi^*} \in \argmin_{q_\phi \in \mathcal{Q}} \mathcal{L}_{\textrm{CSI}_\alpha}(q_\phi),
\end{equation}
where $\mathcal{L}_{\textrm{CSI}_\alpha[h]}$ is defined in (\ref{eq:cosim-loss}) with the choice of criterion $\textrm{D}_\alpha\left(p(\cdot;s)\| q_{\phi}(\cdot;s,t)\right)$ discussed in section \ref{section:sivi}.

We first introduce the following assumptions with a fixed $\lambda > 0$ and an early-stopping time $0<\delta < T$. 

As illustrated in Algorithm \ref{algo:training-procedure}, the approximation errors arise from the inexact second-stage optimization of $\vf_\psi$.
We denote it as follows. 
\begin{assumption}[$\vf_\psi$ function error]\label{assumption:f-psi-error}
  The estimated auxiliary vector-valued function $\vf_{\hat{\psi}(\phi)}(\vx_t;t)$ in the two-stage optimization problem (\ref{eq:generalized-two-stage}) is $\epsilon_\textrm{f}$-accurate, that is for all $s, t\in[\delta,T]$ with $s<t$ and $\phi\in \Phi$:
  \begin{equation}
  \footnotesize
    \E_{q_\phi(\vx_s;s,t)}\|\vf_{\hat{\psi}(\phi)}(\vx_s; s, t) - \vf_{\psi^*(\phi)}(\vx_s; s, t) \|_2^2 \leq \varepsilon_{\textrm{f}}^2.
  \end{equation}
\end{assumption}

Let the $q_{\hat{\phi}}(\vx_s;s,t)$ be the one obtained by the first stage optimization problem with the $\epsilon_\textrm{f}$-accurate $\vf_{\hat{\psi}(\phi)}$
  \begin{align}
   q_{\hat{\phi}}(\cdot;s,t) &\resizebox{0.85\linewidth}{!}{$\in \argmin_{q_\phi \in \mathcal{Q}} \E_{q_{\phi}(\vx_s,\vx_t; s, t)} \Big[\vu_{\hat{\psi}(\phi)}(\vx_s; s, t)^T\left[\vS_{\theta^*}(\vx_s;s)\right.$}\nonumber\\ 
    &\!\!\resizebox{0.84\linewidth}{!}{$\left.-\nabla_{\vx_s}\log q_{\phi}(\vx_s|\vx_t;s,t)\right]-\alpha\|\vu_{\hat{\psi}(\phi)}(\vx_s; s, t)\|_2^2\Big],$}\nonumber 
  \end{align}
where $\vu_{\hat{\psi}(\phi)}(\vx_s; s, t):= \log p(\vx_s;s)-\vf_{\hat{\psi}(\phi)}(\vx_s,\vx_t; s, t)$.

Then we are ready to state the following result.
\begin{proposition}[Error bound of one-step map, \citet{2023semiimplicit}]\label{prop:convergence-onestep}
  Suppose the assumptions [\ref{assumption:f-psi-error}] hold.
  Then for any $0<s<t<T$, the Fisher divergence between the target distribution $p(\vx_\delta)$ and the estimated distribution of CoSIM is bounded as follows
  \begin{align*}
    \textrm{FI}(q_{\hat{\phi}}(\cdot;s,t)\| p(\cdot;s)) &\lesssim \textrm{FI}(q_{\phi^*(\vx_s;s,t)}\|p(\cdot,s)) + \varepsilon_\textrm{f}^2 .
  \end{align*}
\end{proposition}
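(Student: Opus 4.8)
The plan is to exploit that, once $\vx_t$ is integrated out, the first-stage objective defining $q_{\hat\phi}$ is a concave quadratic in the auxiliary field whose value at \emph{any} field equals the scaled Fisher divergence $\textrm{D}_\alpha(q_\phi(\cdot;s,t)\|p(\cdot;s))$ minus an $L^2(q_\phi)$ penalty that measures the gap to the square-completion optimum; I would then plug in the $\varepsilon_\textrm{f}$-accurate $\vf_{\hat\psi(\phi)}$, bound that gap using Theorem~\ref{thm:equilibrium} and Assumption~\ref{assumption:f-psi-error}, and compare $q_{\hat\phi}$ with the best-in-class $q_{\phi^*}$. Throughout I abbreviate $\textrm{FI}(q_\phi\|p):=\textrm{FI}(q_\phi(\cdot;s,t)\|p(\cdot;s))$, and likewise $\textrm{D}_\alpha$, take $\vS_{\theta^*}=\nabla\log p(\cdot;s)$ (any score-network error assumed zero or folded into $\varepsilon_\textrm{f}$), and write $\tilde{\bm{\delta}}_\phi:=\nabla\log p(\cdot;s)-\nabla\log q_\phi(\cdot;s,t)$, so that $\textrm{FI}(q_\phi\|p)=\E_{q_\phi(\vx_s;s,t)}\|\tilde{\bm{\delta}}_\phi\|^2$.

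First I would establish the marginalization identity $\nabla_{\vx_s}\log q_\phi(\vx_s;s,t)=\E_{q_\phi(\vx_t|\vx_s)}[\nabla_{\vx_s}\log q_\phi(\vx_s|\vx_t;s,t)]$ by differentiating $q_\phi(\vx_s;s,t)=\int q_\phi(\vx_s|\vx_t;s,t)q_{\textrm{mix}}(\vx_t;t)\,\dif\vx_t$ under the integral sign. This collapses the objective in the definition of $q_{\hat\phi}$ to $\mathcal{J}(\phi,\vu)=\E_{q_\phi(\vx_s;s,t)}[\vu^T\tilde{\bm{\delta}}_\phi-\alpha\|\vu\|^2]$ for any field $\vu$, and completing the square gives $\mathcal{J}(\phi,\vu)=\tfrac{1}{4\alpha}\textrm{FI}(q_\phi\|p)-\alpha\,\E_{q_\phi(\vx_s;s,t)}\|\vu-\tfrac{1}{2\alpha}\tilde{\bm{\delta}}_\phi\|^2$; in particular $\mathcal{J}(\phi,\vu)\le\textrm{D}_\alpha(q_\phi\|p)=\tfrac{1}{4\alpha}\textrm{FI}(q_\phi\|p)$, with equality only at $\vu=\tfrac{1}{2\alpha}\tilde{\bm{\delta}}_\phi$.

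Next I would locate $\vu_{\hat\psi(\phi)}:=\nabla\log p-\vf_{\hat\psi(\phi)}$ relative to that optimum. By Theorem~\ref{thm:equilibrium}, $\vf_{\psi^*(\phi)}=\beta\nabla\log q_\phi+(1-\beta)\nabla\log p$ with $\beta=\tfrac{1}{2\alpha(1+\lambda)}$, so $\vu_{\psi^*(\phi)}:=\nabla\log p-\vf_{\psi^*(\phi)}=\beta\tilde{\bm{\delta}}_\phi$; since $|\beta-\tfrac{1}{2\alpha}|=\tfrac{\lambda}{2\alpha(1+\lambda)}$, Assumption~\ref{assumption:f-psi-error} and the triangle inequality give $\|\vu_{\hat\psi(\phi)}-\tfrac{1}{2\alpha}\tilde{\bm{\delta}}_\phi\|_{L^2(q_\phi)}\le r(\phi):=\varepsilon_\textrm{f}+\tfrac{\lambda}{2\alpha(1+\lambda)}\sqrt{\textrm{FI}(q_\phi\|p)}$. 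Substituting $\vu=\vu_{\hat\psi(\phi)}$ into the square-completion identity yields $\textrm{D}_\alpha(q_\phi\|p)-\alpha r(\phi)^2\le\mathcal{J}(\phi,\vu_{\hat\psi(\phi)})\le\textrm{D}_\alpha(q_\phi\|p)$. Because $\hat\phi$ minimizes $\phi\mapsto\mathcal{J}(\phi,\vu_{\hat\psi(\phi)})$ while $\phi^*$ minimizes $\phi\mapsto\textrm{D}_\alpha(q_\phi\|p)$ over $\mathcal{Q}$ (these two problems sharing a minimizer since $\textrm{D}_\alpha=\tfrac{1}{4\alpha}\textrm{FI}$), chaining the sandwich at $\hat\phi$ and at $\phi^*$ gives $\textrm{D}_\alpha(q_{\hat\phi}\|p)\le\textrm{D}_\alpha(q_{\phi^*}\|p)+\alpha r(\hat\phi)^2$. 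I then expand $r(\hat\phi)^2$ with $(a+b)^2\le(1+\eta)a^2+(1+\eta^{-1})b^2$ to split off a term proportional to $\textrm{FI}(q_{\hat\phi}\|p)$, pick $\eta$ small enough that $(1+\eta)\lambda^2/(1+\lambda)^2<1$, move that term to the left-hand side, and divide by the resulting positive constant, obtaining $\textrm{FI}(q_{\hat\phi}\|p)\lesssim\textrm{FI}(q_{\phi^*}\|p)+\varepsilon_\textrm{f}^2$ with a constant depending only on $\alpha$ and $\lambda$.

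The main obstacle is exactly this last absorption step. Because the regularization $\lambda>0$ deliberately shifts the trained target $\vf_{\psi^*(\phi)}$ away from the square-completion optimum $\tfrac{1}{2\alpha}\tilde{\bm{\delta}}_\phi$, the residual bound $r(\phi)$ carries a term $\propto\sqrt{\textrm{FI}(q_\phi\|p)}$ rather than $\varepsilon_\textrm{f}$ alone, so the penultimate inequality is self-referential in $\textrm{FI}(q_{\hat\phi}\|p)$; the argument closes only because the shift factor $\lambda/(1+\lambda)$ is strictly below $1$ for every finite $\lambda$, which leaves headroom to absorb the offending term (when $\lambda=0$ it vanishes and one recovers the clean additive bound of \citet{2023semiimplicit}). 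A secondary, routine point is justifying differentiation under the integral sign in the marginalization identity — finiteness of the relevant Fisher information together with dominated convergence, aided by the early-stopping restriction $s,t\in[\delta,T]$ that keeps the scores integrable — which is inherited from the SIVI-SM analysis.
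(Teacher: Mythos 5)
Your proposal is correct and follows essentially the same route as the paper's proof in Appendix B.3: both use the score-marginalization trick to collapse the conditional-layer objective to a quadratic in the auxiliary field, compare the surrogate minimizer $\hat{\phi}$ against $\phi^*$ via the sandwich between the surrogate objective and $\textrm{D}_\alpha$, and absorb a self-referential $\textrm{FI}(q_{\hat{\phi}}\|p)$ term whose coefficient is strictly below one because $\lambda/(1+\lambda)<1$. The only (cosmetic) difference is that you complete the square around the unregularized optimum $\tfrac{1}{2\alpha}\tilde{\bm{\delta}}_\phi$ and route the regularization shift through the triangle inequality in $r(\phi)$, whereas the paper decomposes around the shifted optimum $\tfrac{1}{2\alpha(1+\lambda)}\bm{\delta}_\phi$ so that Assumption~\ref{assumption:f-psi-error} applies directly and the shift appears as the cross term $\cirtwo$; the resulting constants differ slightly but both are of the same order in $\alpha$ and $\lambda$.
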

The above results show that the approximation error of inexact $\vf_\psi$ only attribute an extra term $\varepsilon_{\textrm{f}}^2$ to the approximation accuracy of CoSIM. 
As long as the approximation error $\varepsilon_{\textrm{f}}$ is small, the two stages optimization of CoSIM with regularization strength $\lambda >0$ will provide the same approximation accuracy as the optimization with the original training objective (\ref{eq:gfi}).

\subsection{Insights on the Benefits of Multi-Step Methods}
\label{sec: pcg}
As discussed in section \ref{section:sivi}, we adopt the same type of SIVI objective ($\mathcal{L}_{\textrm{SIVI-}f}$) as in SiD \citep{zhou2024score} for diffusion model distillation. 
However, unlike the deterministic one-step generation scheme of SiD, continuous semi-implicit models (CoSIM) allow us to generate samples from $p(\vx_0)$ through multiple steps. 
A notable advantage of CoSIM is that it injects randomness at each generation step, producing more diverse samples—an observation consistent with the benefits of stochastic diffusion models \citep{song2020score}. 
Moreover, as we show next, CoSIM can generate samples closer to the target distribution than the one-step distillation model by employing multistep sampling. 

Since the optimal variational distribution $q_{\phi^*}(\vx_s;s,t)$ is defined on $\mathcal{Q}$, this family itself will introduce approximation gaps due to the approximation capacity of $\bm{G}_\phi(\vx_t,t)$. 
Since $\bm{G}_\phi(\vx_t,t)$ maps the distribution $p(\vx_t;t)$ back to $p(\vx_0;0)$, the approximation error is expected to grow with $t$. 
Following the assumption in previous work \citet{chen2023improveddpm}, we scale the error term of $\bm{G}_\phi(\vx_t,t)$ accordingly.
\begin{assumption}[Consistency function error]\label{assumption:G-phi-error}
  For all $t\in [\delta, T]$, the approximation gaps between $\nabla \log q_{\phi^*}(\vx_\delta,\delta,t)$ and $\nabla\log p(\vx_\delta;\delta)$ is bounded in $L^2(q_{\phi^*})$:
  \begin{footnotesize}
\[
    \E_{q_{\phi^*}(\vx_\delta;\delta,t)}\|\nabla\log p(\vx_\delta;\delta) - \nabla \log q_{\phi^*}(\vx_\delta;\delta,t)\|_2^2 \leq \varepsilon_{\textrm{g}}(t)^2,
\]
  \end{footnotesize}
    where the error term is scaled by $\varepsilon_\textrm{g} (t)^2:= 2\varepsilon^2_{\textrm{c}}\textrm{FI}(p(\cdot;t+\delta)\|p(\cdot;\delta))$, and error $\varepsilon_{\textrm{c}}$ characterizes the approximation capacity of the consistency function family $\{\bm{G}_\phi| \phi\in \Phi\}$.
  $\textrm{FI}(\cdot\|\cdot)$ denotes the Fisher divergence. 
\end{assumption}
We note that the divergence $\textrm{FI}(p(\cdot;t+\delta)\|p(\cdot,\delta))$ grows as $t$ increases to $T$.
Furthermore, when $\{\bm{G}_\phi \mid \phi\in \Phi\}$ contains only the identity mapping, the aforementioned upper bound holds with $\varepsilon^2_{\textrm{c}}=1$, validating the reasonableness of assumption \ref{assumption:G-phi-error}.
More details can be found in the proof of Proposition \ref{prop:convergence-onestep} in Appendix \ref{app:consistency}.

\begin{assumption}\label{assumption:bounded-support}
  The perturbed distribution $p(\vx_\delta;\delta)$ satisfies a logarithmic Sobolev inequality (LSI) as follows.
  \begin{align}
    \textrm{ent}(f^2)\le 2L_{\textrm{LSI}} \E_{p(\vx_\delta;\delta)}\Gamma(f), \quad \forall f \in \mathcal{C}^{\infty}(\mathbb{R}^d),
  \end{align} 
  where $\textrm{ent}(f^2) = \E_{p(\vx_\delta;\delta)}(f^2(\log f^2-\E_{p(\vx_\delta;\delta)}f^2))$ is the entropy of $f^2$, $\Gamma(f) = \|\nabla f\|_2^2$ and $L_{\textrm{LSI}}$ is the LSI constant.
\end{assumption}
The LSI assumption is a standard assumption in the analysis of diffusion models \citep{lee2023convergences}, and previous works show that the LSI inequality holds when $p(\vx_0;0)$ is a bounded distribution \citep{chen2021LSI}. 
More details can be found in the proof of proposition \ref{prop:convergence-onestep}.

Then we investigate the smoothness assumption of the consistency function $\bm{G}_\phi(\vx_t,t)$.
Following the preconditioning strategy of neural network proposed in \citep{karras2022elucidating,song2023consistency}, we employ the consistency function as $\bm{G}_\phi(\vx_t,t)$ as follows 
\begin{equation}\label{eq:precondition}
  \bm{G}_\phi(\vx_t,t) = c_{\textrm{skip}}(t)\vx_t + c_\textrm{out}(t) \bm{F}_\phi(c_{\textrm{in}}(t)\vx_t,t),
\end{equation}
where $c_{\textrm{skip}}$ modulates the skip connection and $c_{\textrm{in}}$, $c_{\textrm{out}}$ are the scaling factors of network input and output.
\begin{assumption}[Smoothness of neural network]\label{assumption:smoothness-nn}
  The neural network $\bm{F}_\phi(\vx,t)$ in (\ref{eq:precondition}) is $L_f$-Lipschitz continuous on the variable $\vx$ with constant $L_f >1$ for all $t\in[\delta,T]$.
\end{assumption}
The Lipschitz continuous assumption is naturally used in previous works \citep{song2023consistency,lyu2024sampling, Li2024TowardsAM}.
However, we adopt a practical preconditioning and impose the Lipschitz condition on $\bm{F}_\phi$ rather than $\bm{G}_\phi$.
Moreover, these two Lipschitz conditions are equivalent under the variance preserving (VP) scheme, which is shown in Appendix \ref{app:pf-convergence-multistep}.

Based on the above error bound in proposition \ref{prop:convergence-onestep}, we can give a convergence bound for the multistep sampling of CoSIM similar to \citet{lyu2024sampling}.
\begin{proposition}[Multistep Wasserstein Distance Error]\label{thm:convergence-multistep}
  Under Assumptions [\ref{assumption:f-psi-error}-\ref{assumption:smoothness-nn}], 
  consider a sequence of time points $T = t_0 \ge t_1 = \cdots = t_{K-1}=t_{\textrm{mid}} > t_K = 0$.
  Let $q_{\hat{\phi}}^{(K)}(\cdot)$ denote the $K$-step estimated distribution by CoSIM.
  Then for variance preserving scheme of (\ref{eq:sde}), there exists $t_{\textrm{mid}}=\mathcal{O}(\log L_f)$ for variance preserving scheme of (\ref{eq:sde}) such that
  \begin{align*}
    W_2(q_{\hat{\phi}}^{(K)}, p(\cdot;0)) & \lesssim \delta^2 d+(\frac34)^{K-1}\mathcal{E}^{\frac12}_{W_2^2}(T) + \mathcal{E}^{\frac12}_{W_2^2}(t_{\textrm{mid}}),
  \end{align*}
  where $\mathcal{E}_{W_2^2}(t)$ denotes the Wasserstein distance bound between the one-step CoSIM estimate $q_{\hat{\phi}}(\cdot;\delta;t)$ and $p(\cdot;\delta)$. 
  This bound is controlled by the error terms from Assumptions [\ref{assumption:f-psi-error}-\ref{assumption:smoothness-nn}], with the explicit formulation detailed in Appendix \ref{app:pf-convergence-multistep}.
  For variance exploding scheme of (\ref{eq:sde}), the above results hold for some $t_{\textrm{mid}}=\mathcal{O}(L_f)$.
\end{proposition}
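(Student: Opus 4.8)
The plan is a hybrid-path telescoping argument in $W_2$ in the spirit of \citet{lyu2024sampling}, using Proposition \ref{prop:convergence-onestep} as the one-step building block and the Lipschitz continuity of the parameterized kernel (\ref{eq:consistency-param}) to control how one-step errors accumulate. Throughout I use early stopping at $\delta$: I take $p(\cdot;\delta)$ as the proxy target and, at the very end, add $W_2(p(\cdot;\delta),p(\cdot;0))$, which is of order $\delta^2 d$ under the VP schedule by a second-moment estimate on the reparameterization $\vx_\delta=a(\delta)\vx_0+\sigma(\delta)\vepsilon$ of a bounded-support data distribution. For $k=0,\dots,K$ define $\nu_k$ to be the law produced by running the ideal transitions between the forward marginals $p(\cdot;t_{j-1})\to p(\cdot;t_j)$ for the first $K-k$ steps and the learned kernels $q_{\hat\phi}(\cdot\,|\,\cdot;t_j,t_{j-1})$ for the last $k$ steps, so that $\nu_0=p(\cdot;\delta)$ and $\nu_K=q_{\hat\phi}^{(K)}$. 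Then $W_2(q_{\hat\phi}^{(K)},p(\cdot;0))\le\sum_{k=1}^K W_2(\nu_k,\nu_{k-1})+\mathcal{O}(\delta^2 d)$, and since $\nu_k$ and $\nu_{k-1}$ agree except that the $(K-k+1)$-st transition --- started from the \emph{exact} law $p(\cdot;t_{K-k})$ --- is ideal in one and learned in the other, followed by a common composition $\Psi$ of the last $k-1$ learned kernels, we get $W_2(\nu_k,\nu_{k-1})\le\mathrm{Lip}_{W_2}(\Psi)\cdot W_2\big(q_{\hat\phi}(\cdot;t_{K-k+1},t_{K-k}),\,p(\cdot;t_{K-k+1})\big)$.

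The second factor is a one-step CoSIM error. Proposition \ref{prop:convergence-onestep} bounds it in Fisher divergence by $\mathrm{FI}(q_{\phi^*}\|p(\cdot;s))+\varepsilon_{\mathrm f}^2$, Assumption \ref{assumption:G-phi-error} bounds $\mathrm{FI}(q_{\phi^*}\|p(\cdot;s))\lesssim\varepsilon_{\mathrm c}^2\,\mathrm{FI}(p(\cdot;s+\delta)\|p(\cdot;\delta))$, and I pass to $W_2$ using that $p(\cdot;\delta)$ satisfies an LSI (Assumption \ref{assumption:bounded-support}) --- as does every $p(\cdot;t)$, $t\ge\delta$, being a Gaussian smoothing of a bounded measure (Holley--Stroock / Bakry--\'{E}mery) --- via the standard chain $W_2^2(\mu,p)\le 2L_{\mathrm{LSI}}\,\mathrm{KL}(\mu\|p)\le L_{\mathrm{LSI}}^2\,\mathrm{FI}(\mu\|p)$. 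This gives $W_2^2(q_{\hat\phi}(\cdot;\delta,t),p(\cdot;\delta))\le\mathcal{E}_{W_2^2}(t)$, polynomial in $L_{\mathrm{LSI}},\varepsilon_{\mathrm c},\varepsilon_{\mathrm f}$ and increasing in $t$ through $\mathrm{FI}(p(\cdot;t+\delta)\|p(\cdot;\delta))$, and the terminal transition $t_{\mathrm{mid}}\to\delta$ realizes exactly $\mathcal{E}_{W_2^2}^{1/2}(t_{\mathrm{mid}})$. For the non-terminal transitions (those ending at $T$, and the self-transitions $s=t=t_{\mathrm{mid}}$), both $q_{\hat\phi}(\cdot;s,t)$ and $p(\cdot;s)$ are level-$s$ re-noisings of, respectively, the push-forward of $p(\cdot;t)$ under $\bm{G}_\phi(\cdot,t)$ and of $p(\cdot;0)$, so a shared-noise coupling bounds their $W_2$ distance by $a(s)$ times the distance between these two push-forwards, which is $\lesssim\mathcal{E}_{W_2^2}^{1/2}(t)+\mathcal{O}(\delta^2 d)$ after inserting $q_{\hat\phi}(\cdot;\delta,t)$.

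For the Lipschitz factors, coupling the injected $\vepsilon$ makes the map $\vx_t\mapsto a(s)\bm{G}_\phi(\vx_t,t)+\sigma(s)\vepsilon$ of (\ref{eq:consistency-param}) have $W_2$-Lipschitz constant $\le a(s)\,\mathrm{Lip}(\bm{G}_\phi(\cdot,t))$, and the preconditioning (\ref{eq:precondition}) with Assumption \ref{assumption:smoothness-nn} gives $\mathrm{Lip}(\bm{G}_\phi(\cdot,t))\le|c_{\mathrm{skip}}(t)|+|c_{\mathrm{out}}(t)c_{\mathrm{in}}(t)|L_f$. Under the VP scheme this is comparable to $L_f$ (the equivalence of the two Lipschitz conditions, deferred to Appendix \ref{app:pf-convergence-multistep}) while $a(t_{\mathrm{mid}})$ decays exponentially in $t_{\mathrm{mid}}$, so $a(t_{\mathrm{mid}})\,\mathrm{Lip}(\bm{G}_\phi(\cdot,t_{\mathrm{mid}}))\le\tfrac34$ as soon as $t_{\mathrm{mid}}\gtrsim\log L_f$; under the VE scheme $a\equiv1$ but $|c_{\mathrm{out}}c_{\mathrm{in}}|=\mathcal{O}(1/t)$, which forces $t_{\mathrm{mid}}=\mathcal{O}(L_f)$. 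With such a $t_{\mathrm{mid}}$ every refinement kernel is a $\tfrac34$-contraction, so $\mathrm{Lip}_{W_2}(\Psi)\lesssim(\tfrac34)^m$ where $m$ is the number of refinement kernels inside $\Psi$ and the single terminal kernel $t_{\mathrm{mid}}\to\delta$ contributes only an $L_f$-dependent constant absorbed into $\lesssim$. Substituting the one-step bounds above into the telescope, the $k=K$ term uses $t_{K-k}=T$ with $m=K-2$ and contributes $(\tfrac34)^{K-1}\mathcal{E}_{W_2^2}^{1/2}(T)$, each term with $2\le k\le K-1$ uses $t_{K-k}=t_{\mathrm{mid}}$ with $m=k-2$ and contributes $\lesssim(\tfrac34)^{k-2}\mathcal{E}_{W_2^2}^{1/2}(t_{\mathrm{mid}})$, and the $k=1$ term contributes $\mathcal{E}_{W_2^2}^{1/2}(t_{\mathrm{mid}})$; summing the geometric series and adding the early-stopping term yields $W_2(q_{\hat\phi}^{(K)},p(\cdot;0))\lesssim\delta^2 d+(\tfrac34)^{K-1}\mathcal{E}_{W_2^2}^{1/2}(T)+\mathcal{E}_{W_2^2}^{1/2}(t_{\mathrm{mid}})$.

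I expect the main obstacle to be the contraction step: turning the hypothesis Lipschitz bound on $\bm{F}_\phi$ (Assumption \ref{assumption:smoothness-nn}) into a strict $W_2$-contraction of the composite learned kernel with an explicit constant such as $\tfrac34$, and pinning down the threshold for $t_{\mathrm{mid}}$ --- logarithmic in $L_f$ for VP versus linear in $L_f$ for VE. This requires sharp control of the preconditioning coefficients $c_{\mathrm{skip}},c_{\mathrm{out}},c_{\mathrm{in}}$ and of $a(\cdot),\sigma(\cdot)$ for each scheme, including the claimed equivalence of $\bm{G}_\phi$- and $\bm{F}_\phi$-Lipschitzness under VP. A secondary point is making the Fisher-to-$W_2$ conversion uniform over all intermediate noise levels $t\in[\delta,T]$ --- available because the $p(\cdot;t)$ are Gaussian smoothings of a bounded distribution --- together with the re-noising reduction that expresses each non-terminal one-step error through $\mathcal{E}_{W_2^2}(\cdot)$; the remaining steps are routine triangle-inequality and geometric-series bookkeeping.
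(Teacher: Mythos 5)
Your proof is correct and matches the paper's argument in all essential respects: the one-step $W_2$ bound $\mathcal{E}_{W_2^2}(t)$ obtained from Proposition~\ref{prop:convergence-onestep} via the LSI and Talagrand's inequality, the $a(s)\cdot\mathcal{O}(L_f)$ $W_2$-Lipschitz bound on each learned kernel derived from the preconditioning coefficients $c_{\textrm{skip}},c_{\textrm{out}},c_{\textrm{in}}$ under Assumption~\ref{assumption:smoothness-nn}, the choice $t_{\textrm{mid}}=\mathcal{O}(\log L_f)$ (VP) or $\mathcal{O}(L_f)$ (VE) making each intermediate step a $\tfrac34$-contraction, and the resulting geometric series plus the $\delta^2 d$ early-stopping term. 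The only difference is bookkeeping: the paper unrolls the forward error recursion $W_2(q_{\hat{\phi}}^{(k+1)},p(\cdot;\delta))\lesssim a(t_{\textrm{mid}})L_{\textrm{vp}}\,W_2(q_{\hat{\phi}}^{(k)},p(\cdot;\delta))+\mathcal{E}^{\frac12}_{W_2^2}(t_k)$ by a discrete Gr\"onwall inequality, whereas you telescope along a hybrid path that switches from ideal to learned kernels one step at a time --- these are the same estimates summed in the same order and yield the identical bound.
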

Intuitively, since the Wasserstein distance bound $\mathcal{E}_{W_2^2}(t)$ is increasing with $t$, the benefit of multistep sampling lies in reducing the error bound from $\mathcal{E}^{\frac{1}{2}}_{W_2^2}(T)$ of the one-step model to a smaller one $\mathcal{E}^{\frac{1}{2}}_{W_2^2}(t_{\textrm{mid}})$.
We provide a detailed proof of Proposition \ref{thm:convergence-multistep} in Appendix \ref{app:pf-convergence-multistep}.

\section{Experiments}
\label{sec: exp}
In this section, we explore the performance of CoSIM on the unconditional image generation task on CIFAR-10 ($32\times 32$) and the conditional image generation task on ImageNet ($64\times 64$) and ImageNet ($512\times 512$).
For all experiments, the network architecture of the transition kernel $q_\phi(\cdot|\vx_t;s,t)$ and the auxiliary function $\vv_\psi(\vx_s;s,t)$ is almost identical to the pre-trained score network $\vS_{\theta^\ast}(\vx_s;s)$ \citep{karras2022elucidating}, except for an additional time embedding network of $t$.
For the purpose of minimal changes, this is done by duplicating the time embedding network of $s$ in the score network architecture and summing up the time embedding of $s$ and $t$ together, leading to only a $4\%$ increase in parameters.
The implementation of CoSIM is available at \url{https://github.com/longinYu/CoSIM}.

We use the well-established metric Frechet Inception Distance~\citep{salimans2016improved} (FID) to measure the quality of the generated images.
Furthermore, as the FID score often unfairly favors the models trained with GAN losses and penalizes the diffusion models, we consider an additional metric - FD-DINOv2 \citep{stein2024exposing}, which replaces the InceptionV3~\citep{szegedy2016rethinking} encoder of FID by DINOv2~\citep{oquab2024dinov} to better align with human perception.
Both FID and FD-DINOv2 are evaluated on 50K generated images and the whole training set, which means 50K images from CIFAR10~\citep{cifar10} training split and 1,281,167 images from ImageNet~\citep{deng2009imagenet}.

The training setup for CoSIM is basically kept the same as the underlying pre-trained score models, only with a different optimization objective in Section \ref{section:sivi}. 
In practice, we adopt the setting $\alpha = 1.2$ as in \citet{zhou2024score}. 
In contrast to the SiD loss, which uses $\alpha(1+\lambda) = 0.5$, we choose to increase the regularization strength $\lambda$ such that $\alpha(1+\lambda) = 1$ for simplicity.
The time schedule is detailed in Appendix \ref{sec:prac_impl}. We re-use most of the optimizer settings from the pretrained score models with some slight tweakings such as learning rate and batch size, more details can be found in Appendix \ref{appdx: exp det}. 

\subsection{Unconditional Image Generation}\label{sec:uncond-generation}
\paragraph{CIFAR-10 $(32\times 32)$}
On the unconditional image generation task on CIFAR-10, the pre-trained score network $\vS_{\theta^\ast}(\vx_s;s)$ is from \citet{karras2022elucidating}.

Table \ref{tab: uncond cifar} reports the generation quality measured by FID and FD-DINOv2. 
We see that our CoSIM reaches the same FID as the teacher (VP-EDM) with only 4 NFE. 
Although the 2-step CoSIM reaches an FID above 2, this is still considerably lower than many of the leading methods.
The reasons for CoSIM's lagging behind other methods can be attributed to our model size, which is significantly smaller than others. 
On the FD-DINOv2, our CoSIM achieves state-of-the-art results compared to both trained-from-scratch and distilled models.
Specifically, CoSIM attains an FD-DINOv2 of 113.51 with 4 NFE, outperforming the best-ever PFGM++ \citep[an FD-DINOv2 of 141.65]{xu2023pfgm++} by a large margin.
We also note that this leading status of CoSIM still holds with 2 NFE.
Furthermore, the FID metric's inclination towards GAN-based models~\citep{stein2024exposing} may partly explain why CTM achieves the best FID score, rather than FD-DINOv2.
Finally, we observe a consistent decrease in both FID and FD-DINOv2 as the NFE increases from 2 to 4, which accords with our theoretical analysis in Section \ref{sec: pcg}.

\begin{table}[t]
  \caption{Unconditional generation quality on CIFAR-10 ($32\times 32$). Results with asterisks ($^*$) are tested by ourselves with the official codes and checkpoints.
  ECT does not provide official checkpoints, so its FD-DINOv2 score ($^\dag$) comes from our re-implementation.
  The other results are from the original papers. The best result is marked in \textbf{black bold font} and the second best result is marked in \textbf{\color{Sepia!60}{brown bold font}}.}
  \label{tab: uncond cifar}
  \centering  
      \vskip0.5em
  \resizebox{\linewidth}{!}{
    \begin{tabular}{lcccc}
     \toprule
     Method &  \#Params & NFE & FID~($\downarrow$) & FD-DINOv2~($\downarrow$) \\
     \midrule
     CIFAR:Test Split & - & - & $3.15$ & $31.07$ \\
     \midrule
     DDPM~\citep{ho2020denoising} &  & $1000$ & $3.17$ & - \\
     DDIM~\citep{ddim} & & $100$ & $4.16$ & - \\
     TDPM+~\citep{zheng2023truncated} & & $100$ & $2.83$ & - \\
     DPM-Solver3~\citep{lu2022dpmsolver} & & $48$ & $2.65$ & - \\
     VP-EDM~\citep{karras2022elucidating} & $56\textrm{M}$ & $35$ & $1.97$ & $168.01^*$ \\
     VP-EDM+LEGO-PR~\citep{zheng2023learning} & & $35$ & $1.88$ & - \\
     PFGM++~\citep{xu2023pfgm++} & & $35$ & $1.91$ & $141.65$ \\
     \midrule
     HSIVI-SM~\citep{yu2023hierarchical} & & $15$ & $4.17$ & - \\
     CD-LPIPS~\citep{song2023consistency} & & $1$ & $3.55$ & - \\
      & & $2$ & $2.93$ & - \\
     iCT~\citep{song2024improved} & & $1$ & $2.83$ & -\\
      & & $2$ & $2.46$ & - \\
     sCT~\citep{lu2024simplifyingstabilizingscalingcontinuoustime} & & $1$ & $2.97$ & - \\
      & & $2$ & $2.06$ & - \\
     CTM~\citep{kim2023consistency} & $324\textrm{M}$ & $1$ & $1.98$ & $237.08^*$\\
     CTM~\citep{kim2023consistency} & $324\textrm{M}$ & $2$ & $\color{Sepia!60}{\mathbf{1.87}}$ & $210.64^*$\\
     CTM~\citep{kim2023consistency} & $324\textrm{M}$ & $4$ & $\bm{1.84}^*$ & $197.59^*$\\
     ECT~\citep{geng2024consistencymodelseasy} & $56\textrm{M}$ & $2$ & $2.11$ & $211.17^\dag$\\
     SiD~\cite{zhou2024score} & $56\textrm{M}$ & $1$ & $1.92$ & $148.17^*$ \\
     \midrule
     \textbf{CoSIM (ours)} & $56\textrm{M}$ & $2$ & $2.40$ & $116.92$ \\
     \textbf{CoSIM (ours)} & $56\textrm{M}$ & $4$ & $1.97$ & $\color{Sepia!60}{\mathbf{113.51}}$\\
     CoSIM(ours) &$56\textrm{M}$& 6 & $1.96$ & $\bm{111.98}$ \\
     \bottomrule
    \end{tabular}
  }
\end{table}

\subsection{Conditional Image Generation}\label{sec:cond-generation}

\paragraph{ImageNet $(64\times 64)$}
Similarly to Section \ref{sec:uncond-generation}, the network structure and the checkpoint of the pre-trained score model $\vS_{\theta^\ast}(\vx_s;s)$ are from \citet{karras2022elucidating}.

We report the conditional generation quality of different models in Table \ref{tab: cond img64}.
We see that the VP-EDM (the teacher model) can get a worse FID with 999 NFE, while the FD-DINOv2 decreases monotonically as NFE increases.
This consistency in FD-DINOv2 further verifies the reasonability of introducing FD-DINOv2 for measuring the generation quality of diffusion-based models.
After distillation, our CoSIM produces an FID of 1.46 with 4 NFE, reaching a similar performance to VP-EDM with 999 NFE and outperforming most methods in Table \ref{tab: cond img64}.
Our CoSIM also achieves state-of-the-art results on FD-DINOv2, setting the lowest record at 56.66 with only 4 NFE.
We also notice that Moment Matching \citep{salimans2024multistep} reaches a better FID score compared to our CoSIM, which we attribute to the smaller network architecture employed by us.
\begin{figure*}[t!]
    \centering
    \includegraphics[width=0.95\textwidth]{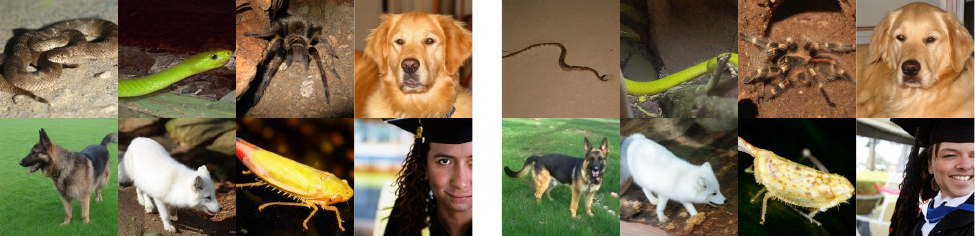}
    \caption{Conditionally generated images on ImageNet ($512\times512$). The two batches of images are generated from 4-step (\textbf{left}) and 2-step (\textbf{right}) CoSIM L model respectively, with identical initial noise and class labels.}
    \label{fig:4B2}
\end{figure*}
\paragraph{ImageNet $(512\times 512)$} FD-DINOv2 is a new metric proposed very recently~\citep{stein2024exposing}. 
Most of the models published before FD-DINOv2 naturally did not consider metric in their experiments and thus may not set up their models perfectly for getting a good FD-DINOv2 result.
To make a more convincing comparison, we further test CoSIM on ImageNet $(512\times 512)$ against EDM2 \citep{karras2024analyzing} which also incorporates FD-DINOv2 benchmark in their original paper. 
This experiment also validates the scalability of our approach. 

We test our method on three different model sizes (S, M, L) from \citet{karras2024analyzing}, which are significantly larger than those models tested in CIFAR $32\times32$ and Imagenet $64\times64$~. 
Also, with a steady increase in parameter numbers from S to M and L, we showcase the scalability of our approach by generating superior results on larger models. 

\begin{table}[t]
    \caption{Conditional generation quality measured by FD-DINOv2 on ImageNet ($512\times512$). 
    Results of SiD are reported from \citet{zhou2025adversarial}. }
    \label{tab: cond img512 dino}
    \centering
        \vskip0.5em
    \resizebox{\linewidth}{!}{ 
    \begin{tabular}{lccc}
    \toprule
    Method & \#Params & NFE & FD-DINOv2~($\downarrow$) \\
    \midrule
    EDM2-S-DINOv2~\citep{karras2024analyzing}  & $280\textrm{M}$ & $63$ & $68.64$ \\
    EDM2-M-DINOv2~\citep{karras2024analyzing}  & $498\textrm{M}$ & $63$ & $58.44$ \\
    EDM2-L-DINOv2~\citep{karras2024analyzing}  & $778\textrm{M}$ & $63$ & $52.25$ \\
    EDM2-XL-DINOv2~\citep{karras2024analyzing}  & $1.1\textrm{B}$ & $63$ & $45.96$ \\
    EDM2-XXL-DINOv2~\citep{karras2024analyzing}  & $1.5\textrm{B}$ & $63$ & $42.84$ \\
    \midrule
    SiD-S~\citep{zhou2024score} & $280\textrm{M}$ & $1$ & $\bm{65.08}$ \\
    SiD-M~\citep{zhou2024score} & $498\textrm{M}$ & $1$ & $55.92$ \\
    SiD-L~\citep{zhou2024score} & $777\textrm{M}$ & $1$ & $56.25$ \\
    \midrule
    \textbf{CoSIM-S (ours)}& $280\textrm{M}$ & $2$ & $67.81$ \\
     & $280\textrm{M}$ & $4$ & $67.71$ \\
    \textbf{CoSIM-M (ours)} & $498\textrm{M}$ & $2$ & $53.35$\\
     & $498\textrm{M}$ & $4$ & $\bm{51.57}$ \\
    \textbf{CoSIM-L (ours)} & $778\textrm{M}$ & $2$ & $46.41$ \\
     & $778\textrm{M}$ & $4$ & $\bm{41.79}$\\
    \bottomrule
    \end{tabular}}
\end{table}

Tables [\ref{tab: cond img512 dino}-\ref{tab: cond img512 FID}] report the conditional generation quality measured by FD-DINOv2 and FID of different
models.
As the model size grows, the generation quality of both models improves consistently.
For all the model sizes (S,M,L), our CoSIM with only 2 NFE surpasses the teacher with 63 NFE, demonstrating the effective distillation ability of CoSIM for handling high-dimensional samples and larger models.
Due to the GPU memory limit, we do not test CoSIM on XL and XXL models but expect its similarly superior performance, since CoSIM-L with 4 NFE already beats EDM2-XXL-dino.

We also provide the conditionally generated images from CoSIM in Figure \ref{fig:4B2}.
We ensure that the 4-step generation and 2-step generation start from the same initial noise and class labels, and show that there is a clearly observable difference.
Specifically, 2-step samples are good enough to capture the majority of contents, but 4-step samples tend to capture more fine-grained details, e.g., the human eyes, cricket legs, and snake bodies shown in this figure.

\begin{table}[t]
    \caption{Conditional generation quality on ImageNet ($64\times 64$).
    Results with asterisks ($^*$) are tested by ourselves with the official codes and checkpoints.
    The other results are from the original papers.
    The best result is marked in \textbf{black bold font} and the second best result is marked in \textbf{\color{Sepia!60}{brown bold font}}.}
    \label{tab: cond img64}
    \centering
    \vskip0.5em
    \resizebox{\linewidth}{!}{ 
    \begin{tabular}{lcccc}
    \toprule
    Method & \#Params & NFE & FID~($\downarrow$) & FD-DINOv2~($\downarrow$) \\
    \midrule
    DDPM~\citep{ho2020denoising} & & $250$ & $11.00$ & - \\
    TDPM+~\citep{zheng2023truncated} & & $1000$ & $1.62$ & - \\
    DPM-Solver3~\citep{lu2022dpmsolver} & & $50$ & $17.52$ & - \\
    VP-EDM~\citep{karras2022elucidating} & $296\textrm{M}$ & $79$ & $2.64$ & $107.07^*$ \\
     & $296\textrm{M}$ & $511$ & $1.36$ & $79.82^*$ \\
     & $296\textrm{M}$ & $999$ & $1.41^*$ & $72.67^*$ \\
    VP-EDM+LEGO-PR~\citep{zheng2023learning} & & $250$ & $2.16$ & - \\
    \midrule
     HSIVI-SM~\citep{yu2023hierarchical} & & $15$ & $15.49$ & - \\
     CD-LPIPS~\citep{song2023consistency} & & $1$ & $6.20$ & -\\
     & & $2$ & $4.70$ & - \\
     iCT~\citep{song2024improved} & & $1$ & $4.02$ & -\\
      & & $2$ & $3.20$ & - \\
     sCT~\citep{lu2024simplifyingstabilizingscalingcontinuoustime} & & $1$ & $2.04$ & - \\
      & & $2$ & $1.48$ & - \\
    CTM~\citep{kim2023consistency} & $604\textrm{M}$ & $1$ & $1.92$ & $163.47^*$\\
     & $604\textrm{M}$ & $2$ & $1.73$ & $159.04^*$ \\
     Moment Matching~\citep{salimans2024multistep} & $400\textrm{M}$ & $1$ & $3.0$ & - \\
     & $400\textrm{M}$ & $2$ & $3.86$ & - \\
     & $400\textrm{M}$ & $4$ & $1.50$ & - \\
     & $400\textrm{M}$ & $8$ & $\bm{1.24}$ & - \\
    ECT~\citep{geng2024consistencymodelseasy} & $296\textrm{M}$ & $2$ & $1.67$ & $\sim 150$ \\
    Diff-Instruct~\citep{luo2023diffinstruct} & $296\textrm{M}$ & $1$ & $5.57$ & - \\
    SiD~\cite{zhou2024score} & $296\textrm{M}$ & $1$ & $1.52$ & $\color{Sepia!60}{\mathbf{79.15}}^*$ \\
    \midrule 
    \textbf{CoSIM (ours)} & $296\textrm{M}$ & $2$ & $3.35$ & $108.99$\\
    \textbf{CoSIM (ours)} & $296\textrm{M}$ & $4$ & $\color{Sepia!60}{\mathbf{1.46}}$ & $\bm{58.66}$ \\
    \bottomrule
    \end{tabular}}
\end{table}

\begin{table}[t!]
    \caption{Conditional generation quality measured by FID on ImageNet ($512\times512$).}
    \label{tab: cond img512 FID}
    \centering
        \vskip0.5em
    \resizebox{\linewidth}{!}{ 
    \begin{tabular}{lccc}
    \toprule
    Method & \#Params & NFE & FID~($\downarrow$) \\
    \midrule
    EDM2-S-FID~\citep{karras2024analyzing}  & $280\textrm{M}$ & $63$ & $2.56$ \\
    EDM2-M-FID~\citep{karras2024analyzing}  & $498\textrm{M}$ & $63$ & $2.25$ \\
    EDM2-L-FID~\citep{karras2024analyzing}  & $778\textrm{M}$ & $63$ & $2.06$ \\
    EDM2-XL-FID~\citep{karras2024analyzing}  & $1.1\textrm{B}$ & $63$ & $1.96$ \\
    EDM2-XXL-FID~\citep{karras2024analyzing}  & $1.5\textrm{B}$ & $63$ & $1.91$ \\
    \midrule
    sCD-S~\citep{lu2024simplifyingstabilizingscalingcontinuoustime} & $280\textrm{M}$ & $2$ & $\bm{2.50}$ \\
    sCD-M~\citep{lu2024simplifyingstabilizingscalingcontinuoustime} & $498\textrm{M}$ & $2$ & 2.26 \\
    sCD-L~\citep{lu2024simplifyingstabilizingscalingcontinuoustime} & $778\textrm{M}$ & $2$ & 2.04 \\
    SiD-S~\citep{zhou2024score} & $280\textrm{M}$ & $1$ & $2.71$ \\
    SiD-M~\citep{zhou2024score} & $498\textrm{M}$ & $1$ & $2.06$ \\
    SiD-L~\citep{zhou2024score} & $778\textrm{M}$ & $1$ & $1.91$ \\
    \midrule
    \textbf{CoSIM-S (ours)}& $280\textrm{M}$ & $2$ & $2.66$ \\
     & $280\textrm{M}$ & $4$ & $2.56$ \\
    \textbf{CoSIM-M (ours)} & $498\textrm{M}$ & $2$ & $1.95$\\
     & $498\textrm{M}$ & $4$ & $\bm{1.93}$ \\
    \textbf{CoSIM-L (ours)} & $778\textrm{M}$ & $2$ & $1.84$ \\
     & $778\textrm{M}$ & $4$ & $\bm{1.83}$\\
    \bottomrule
    \end{tabular}}
\end{table}

\section{Conclusion}
We presented CoSIM, a continuous-time semi-implicit framework for accelerating diffusion models through stochastic multi-step generation.
Unlike the discrete-time sequential training in hierarchical semi-implicit variational inference, CoSIM leverages a continuous-time framework to approximate continuous transition kernels.
By introducing a framework of equilibrium point shifting, we establish theoretical guarantees for unbiased two-stage optimization while resolving the pathological behavior inherent in SiD training objectives.
Through careful parameterization of continuous transition kernels, we provide a novel method for multistep distillation of generative models training on a distributional level.
In experiments, we demonstrate that CoSIM achieves comparable or superior FID while only requiring fewer training iterations and utilizing similar or smaller neural networks compared to existing one-step distillation methods and consistency model variants.
Furthermore, CoSIM achieves state-of-the-art performance on both unconditional and conditional image generation tasks as measured by FD-DINOv2.

\paragraph{Limitations}
For diffusion distillation, CoSIM training involves three models: the generator $\bm{G}_\theta$, the auxiliary function $\vf_\psi$ and the target score model $\vS_{\theta^*}$. 
As a result, CoSIM incurs high memory consumption due to the involvement of multiple models. 
Additionally, since CoSIM initializes the generator $\bm{G}_{\phi}$ and the function $\vf_\psi$ from a pre-trained target score model, it limits the flexibility of $\bm{G}_{\phi}$ to leverage larger architectures for modeling the continuous transition kernels. 
Consequently, the one-step generation quality of CoSIM is lower than that of modern one-step distillation models such as SiD and sCD, we defer this aspect to future research.

\section*{Impact Statement}
This paper presents work whose goal is to advance the field of 
Machine Learning. There are many potential societal consequences 
of our work, none which we feel must be specifically highlighted here.

\section*{Acknowledgements}
This work was supported by National Natural Science Foundation of China (grant no. 12201014, grant no. 12292980 and grant no. 12292983).
The research of Cheng Zhang was support in part by National Engineering Laboratory for Big Data Analysis and Applications, the Key Laboratory of Mathematics and Its Applications (LMAM) and the Key Laboratory of Mathematical Economics and Quantitative Finance (LMEQF) of Peking University.
The research of S.-H. Gary Chan was supported, in part, by Research Grants Council Collaborative Research Fund (under grant number C1045-23G).
The authors appreciate the anonymous ICML reviewers for their constructive feedback.

\bibliography{ref}
\bibliographystyle{icml2025}

\newpage
\appendix
\onecolumn

\section{SIVI Objectives}\label{app:sivi}

\subsection{Details of training procedure}
\label{sec:prac_impl}
\begin{algorithm}[tb]
  \caption{Training procedure of Continuous Semi-Implicit Models (CoSIM)}
  \label{algo:training-procedure}
\begin{algorithmic}
  \STATE {\bfseries Input:} Dataset $\mathcal{D}$, pretrained score model $\vS_{\theta^*}(\vx_s;s)$, continuous transition kernel $q_{\phi}(\vx_s|\vx_t;s,t)$, auxiliary vector-valued function $\vf_{\psi}(\vx_s;s,t)$ and $\lambda, \alpha$ s.t. $\alpha=1.2$, $\alpha(1+
  \lambda) = 1.0$.
  \STATE {\bfseries Output:} Continuous transition kernel $q_{\phi}(\vx_s|\vx_t;s,t)$.
  \STATE Initialization: $\phi \leftarrow \theta^*, \psi \leftarrow \theta^*$ with require\_all=False and iteration number $n=0$.
  \REPEAT
  \STATE Sample continuous time points $s \sim \pi(s)$ and $t \sim \pi(t|s)$.
  \STATE Sample $\vx_0 \sim \mathcal{D}$ and $\vx_t \sim p(\vx_t|\vx_0;t,0)$.
  \STATE Sample $\vx_s \sim q_{\phi}(\vx_s|\vx_t;s,t)$.
  \STATE $\vu_\psi(\vx_s;s,t) \rightarrow \vS_{\theta^*}(\vx_s;s) - \vf_{\psi}(\vx_s;s,t)$.
  \IF{$n$ is odd}
  \STATE Update $\phi \leftarrow \phi - \eta w_1(s) \nabla_\phi\Big\{\vu_\psi(\vx_s;s,t)^T\left[\vS_{\theta^*}(\vx_s;s)-\nabla_{\vx_s}\log q_{\phi}(\vx_s|\vx_t;s,t)\right] -\alpha\|\vu_\psi(\vx_s;s,t)\|_2^2\Big\}$.
  \ELSE
    \STATE Update $\psi \leftarrow \psi - \eta w_2(s) \nabla_\psi\Big\{\vu_\psi(\vx_s; s, t)^T\left[\nabla_{\vx_s}\log q_{\phi}(\vx_s|\vx_t;s,t) -\vS_{\theta^*}(\vx_s;s)\right]+\alpha(1+\lambda)\|\vu_\psi(\vx_s; s, t)\|_2^2\Big\}$.
  \ENDIF
  \STATE $n \leftarrow n+1$.
    \UNTIL{convergence}
\end{algorithmic}
\end{algorithm}
\paragraph{Choice of Time Schedule}
We now discuss the design principles of the time schedule $\pi(s,t)$ defined over $[\delta, T]$. 
First, we decompose $\pi(s,t)$ into the marginal distribution $\pi(s)$ and the conditional distribution $\pi(t|s)$. 
For $\pi(s)$, we adopt the EDM time schedule \citep{karras2022elucidating}:
$$
s = (\sigma_{\textrm{min}}^{\frac{1}{\rho}} + r_s \cdot (\sigma_{\textrm{max}}^{\frac{1}{\rho}} - \sigma_{\textrm{min}}^{\frac{1}{\rho}}))^{\rho}, \quad r_s \sim \textrm{Uniform}[0,1],
$$
where $\sigma_{\textrm{min}} = \delta$, $\sigma_{\textrm{max}}=T$, and $\rho=7.0$. 
The conditional distribution $\pi(t|s)$ is parameterized as:
$$
t = (\sigma_{\textrm{min}}^{\frac{1}{\rho}} + r_t \cdot (\sigma_{\textrm{max}}^{\frac{1}{\rho}} - \sigma_{\textrm{min}}^{\frac{1}{\rho}}))^{\rho}, \quad r_t = \min\{r_s+\gamma, 1\},
$$
where $\gamma \sim \pi(\gamma|r_s)$. 
Since the continuous transition kernel $q_{\textrm{trans}}(\vx_s|\vx_t;s,t)$ learns the mapping from distribution $p(\cdot;t)$ to $p(\cdot;s)$, the learning complexity increases with the time difference between $t$ and $s$. 
Therefore, we design the distribution $\pi(\gamma|r_s)$ to assign higher probabilities to larger time difference $\gamma$.
$$
\gamma = B \cdot \textrm{Uniform}[0,1] \cdot \textrm{Uniform}\{1,\frac12,\cdots,\frac{1}{R}\} + (1-B), \quad B \sim \textrm{Bernoulli}(\frac12),
$$
where $R>1$ is a hyperparameter controlling the probability of sampling smaller values of $\gamma$. 

Leveraging the aforementioned time schedule and adopting the weight functions $w_i(s)$ $(i=1,2)$ from SiD \citep{zhou2024score}, we present our complete training procedure in Algorithm \ref{algo:training-procedure}.
\section{Theoretical Result}
\subsection{Proof of Theorem \ref{thm:equilibrium}} \label{app:pf-equilibrium}
Here we provide a general version of theorem \ref{thm:equilibrium} and illustrate the scaled Fisher divergence within the framework of generalized Fisher divergence \citep{cheng2023gwg}.
Consider the generalized Fisher divergence equipped with a strictly smooth convex function $h(\vx)$ 
\begin{align}
  &\textrm{GFI}[h](p(\cdot;s)\| q_{\phi}(\cdot;s,t)) := \E_{q_{\phi}(\vx_s;t,s)}h\left(\nabla\log p(\vx_s;s) - \nabla\log q_{\phi}(\vx_s;s,t)\right), \nonumber
\end{align}
where $h(\mathbf{0})=0$ and $h(\vx)>0$ for $\vx\neq \mathbf{0}$. 

\begin{theorem}\label{thm:gfi-equilibrium}
  Let the variational distribution $q_{\phi}(\vx_s;s,t)$ defined as in (\ref{eq:marginal}) and $\vv_\psi(\vx_s;s, t) := \nabla \log p(\vx_s;s) - \vf_\psi(\vx_s;s, t)$.
  Then the optimization of $\textrm{GFI}[h]\left(p(\cdot;s)\| q_{\phi}(\cdot;s,t)\right)$ is equivalent to the two-stages alternating optimization problem as follows
  \begin{align}\label{eq:gfi-generalized-two-stage}
    \min_{\phi} &\quad \E_{q_{\phi}(\vx_s,\vx_t; s, t)} \Big[\vv_\psi(\vx_s; s, t)^T\left[\nabla \log p(\vx_s; s)-\nabla_{\vx_s}\log q_{\phi}(\vx_s|\vx_t;s,t)\right]-h^*(\vv_\psi(\vx_s; s, t))\Big], \\
    \min_{\psi} &\quad \E_{q_{\phi}(\vx_s,\vx_t; s, t)} \left[\vv_\psi(\vx_s; s, t)^T\left[\nabla_{\vx_s}\log q_{\phi}(\vx_s|\vx_t;s,t)-\nabla \log p(\vx_s; s)\right]+(1+\lambda)h^*(\vv_\psi(\vx_s; s, t))\right],\nonumber
  \end{align}
  where $\lambda \ge0$ is a given regularization strength hyperparameter and $h^*(\cdot)$ is the Legendre transformation of $h(\cdot)$. 
  Specifically, if $h(\vx) = \frac{1}{4\alpha}\|\vx\|_2^2$ and $\nabla\log p(\vx_s;s)$ is approximated by $\vS_{\theta^*}(\vx_s;s)$,
  (\ref{eq:gfi-generalized-two-stage}) becomes the SiD loss (\ref{eq:SiD}).
  The optimal $\vf_{\tilde{\psi}^*(\phi)}(\vx_s; s, t)$ of the second-stage optimization is given by 
  \begin{equation}
    \vf_{\tilde{\psi}^*(\phi)}(\cdot; s, t) = \beta \nabla \log q_\phi(\cdot; s, t) + (1-\beta) \vS_{\theta^*}(\cdot; s),
  \end{equation}
  where $\beta = \frac{1}{2\alpha(1+\lambda)}$. Similarly, the equilibrium of $\phi$ remains consistent with (\ref{eq:nash-equilibrium}).
\end{theorem}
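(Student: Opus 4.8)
The plan is to prove the theorem in two movements: first establishing the variational (dual) representation of the generalized Fisher divergence $\textrm{GFI}[h]$ via the Legendre transform, and then analyzing the fixed point of the alternating optimization to read off the shifted optimal $\vf_{\tilde{\psi}^*(\phi)}$ and the unchanged equilibrium of $\phi$.

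First I would establish the key pointwise identity. For a strictly smooth convex $h$ with $h(\bm 0)=0$ and Legendre transform $h^*$, the Fenchel--Young equality gives $h(\vw) = \sup_{\vv}\big[\vv^T\vw - h^*(\vv)\big]$, with the supremum attained at $\vv = \nabla h(\vw)$. Taking $\vw = \nabla\log p(\vx_s;s) - \nabla\log q_\phi(\vx_s;s,t)$ and substituting $\vv = \vv_\psi(\vx_s;s,t) := \nabla\log p(\vx_s;s) - \vf_\psi(\vx_s;s,t)$ (so that maximizing over $\vf_\psi$ is the same as maximizing over $\vv_\psi$), we get
\begin{equation*}
  \textrm{GFI}[h]\big(p(\cdot;s)\|q_\phi(\cdot;s,t)\big) = \max_{\vf_\psi} \E_{q_\phi(\vx_s;s,t)}\Big[\vv_\psi(\vx_s;s,t)^T\big(\nabla\log p(\vx_s;s) - \nabla\log q_\phi(\vx_s;s,t)\big) - h^*(\vv_\psi(\vx_s;s,t))\Big].
\end{equation*}
The next step is the standard SIVI manipulation: replacing $\nabla\log q_\phi(\vx_s;s,t)$ (the marginal score, which is intractable) by $\E_{q_\phi(\vx_t|\vx_s;s,t)}\nabla_{\vx_s}\log q_\phi(\vx_s|\vx_t;s,t)$. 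This identity follows by differentiating $q_\phi(\vx_s;s,t) = \int q_\phi(\vx_s|\vx_t;s,t)q_{\textrm{mix}}(\vx_t;t)\,d\vx_t$ and using Bayes' rule, exactly as in the derivation of $\mathcal{L}_{\textrm{SIVI-SM}}$ in Section~\ref{sec:sim_distillation}. Since the cross term is linear in the score, this substitution turns the expectation over $q_\phi(\vx_s;s,t)$ into one over the joint $q_\phi(\vx_s,\vx_t;s,t)$, yielding precisely the inner ($\min_\psi$, i.e. $\max_{\vv_\psi}$) objective of (\ref{eq:gfi-generalized-two-stage}) up to sign, and the outer $\min_\phi$ objective upon viewing $\textrm{GFI}[h]$ as a function of $\phi$ after the inner max.

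For the regularization piece I would argue that adding the regularizer only rescales the concave penalty $h^*$ by $(1+\lambda)$ in the $\psi$-stage: completing the argument for the quadratic case $h(\vx) = \frac{1}{4\alpha}\|\vx\|_2^2$, where $h^*(\vv) = \alpha\|\vv\|_2^2$, the $\psi$-objective is $\vv_\psi^T(\nabla_{\vx_s}\log q_\phi(\vx_s|\vx_t;s,t) - \nabla\log p(\vx_s;s)) + (1+\lambda)\alpha\|\vv_\psi\|_2^2$; setting the gradient in $\vv_\psi$ to zero gives $\vv_{\psi^*} = \frac{1}{2\alpha(1+\lambda)}\big(\nabla\log p(\vx_s;s) - \nabla\log q_\phi(\vx_s;s,t)\big)$ (after replacing the conditional score by the marginal score under the joint expectation, which is valid at the stationary point). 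Unwinding $\vv_\psi = \nabla\log p - \vf_\psi$ and writing $\beta = \frac{1}{2\alpha(1+\lambda)}$ gives $\vf_{\tilde\psi^*(\phi)} = \beta\nabla\log q_\phi(\cdot;s,t) + (1-\beta)\nabla\log p(\cdot;s)$, and with $\nabla\log p \approx \vS_{\theta^*}$ this is the claimed formula. Finally, to see the $\phi$-equilibrium is unchanged: plug $\vv_{\psi^*}$ back into the $\phi$-objective; the regularizer does not appear in the $\phi$-objective, so the effective loss in $\phi$ is (a positive multiple of) $\E_{q_\phi(\vx_s;s,t)}\|\nabla\log p(\vx_s;s) - \nabla\log q_\phi(\vx_s;s,t)\|_2^2 = 4\alpha\,\textrm{GFI}[h]$, whose minimizer over $\phi$ is exactly (\ref{eq:nash-equilibrium}) with $\vS_{\theta^*}$ in place of $\nabla\log p$; the $\lambda$-dependence has dropped out.

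The main obstacle I anticipate is handling the interchange of the optimization stages carefully: one must verify that the $\phi$-objective obtained after the inner maximization really is (a monotone reparametrization of) $\textrm{GFI}[h]$, i.e. that substituting the \emph{inexact} $\vf_\psi$ appearing in the alternating scheme does not affect the set of minimizers in $\phi$ — this is the content of ``the bias in $\psi$ does not transfer to $\phi$.'' The clean way to see it is that the $\phi$-gradient of the $\phi$-objective, evaluated with $\vf_\psi$ held fixed at its stage-optimal value $\vf_{\tilde\psi^*(\phi)}$, coincides (by the envelope theorem, since $\vv_\psi$ is at a stationary point of the inner problem) with the $\phi$-gradient of $\textrm{GFI}[h]$ up to the constant $4\alpha$; regularity of $h^*$ and an interchange-of-differentiation-and-expectation argument (dominated convergence) justify the envelope step. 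A secondary technical point is the score-substitution identity requiring $q_\phi(\vx_s;s,t) > 0$ and enough integrability to differentiate under the integral; under the Gaussian conditional parameterization (\ref{eq:consistency-param}) with $\sigma(s) > 0$ this is immediate, so I would state it as a mild regularity hypothesis and move on.
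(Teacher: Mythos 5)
Your proposal is correct and follows essentially the same route as the paper's proof: Legendre/Fenchel--Young duality to get the variational form of $\textrm{GFI}[h]$, the SIVI score-substitution identity to pass from the marginal to the conditional score under the joint expectation, explicit completion of the square in the $(1+\lambda)$-regularized quadratic inner problem to obtain $\vv_{\psi^*}=\frac{1}{2\alpha(1+\lambda)}(\nabla\log p-\nabla\log q_\phi)$ and hence the shifted $\vf_{\tilde\psi^*(\phi)}$, and direct substitution back into the $\phi$-objective to see that the effective loss is a positive multiple of the Fisher divergence (the paper computes the constant $\tfrac{1+2\lambda}{4\alpha(1+\lambda)^2}$ explicitly), so the equilibrium of $\phi$ is unchanged. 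The only cosmetic difference is your appeal to the envelope theorem in the discussion of obstacles, which is unnecessary here since the direct substitution you also carry out already settles the point.
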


\begin{proof}[Proof of Theorem \ref{thm:gfi-equilibrium}]
We can write the optimization problem with the generalized Fisher divergence $\textrm{GFI}[h]$ as 
\begin{equation}\label{eq:appendix-gfi}
  \min_{q_\phi} \quad \E_{q_\phi(\vx_s;t,s)} h (\nabla\log p(\vx_s;s) - \nabla\log q_{\phi}(\vx_s;s,t)).
\end{equation}
Since the Legendre transformation of $h(\vx)$ is that $h^*(\vy) = \max_{\vx} \{\vy^T\vx - h(\vx)\}$ and $h^{**}(\cdot) = h(\cdot)$,
we can rewrite the optimization problem as
\begin{equation}
  \min_{q_\phi} \max_{\vv_{\psi}(\cdot;s,t)} \quad \E_{q_{\phi}(\vx_s;s,t)}\Big[\vv_\psi(\vx_s;s,t)^T \left[ \nabla\log p(\vx_s;s) -\nabla\log q_{\phi}(\vx_s;s,t) \right] - h^*(\vv_\psi(\vx_s;s,t))\Big],
\end{equation}
By the trick on the score function $\nabla\log q_{\phi}(\vx_s;s,t)$, we have
\begin{equation}
  \E_{q_{\phi}(\vx_s;s,t)} \left<\vv_\psi(\vx_s;s,t), \nabla\log q_{\phi}(\vx_s;s,t)\right> = \E_{q_{\phi}(\vx_s,\vx_t;s,t)} \left<\vv_\psi(\vx_s;s,t), \nabla\log q_{\phi}(\vx_s|\vx_t;s,t)\right>. 
\end{equation}
Therefore, the optimization problem (\ref{eq:appendix-gfi}) can be rewritten as these two stages optimization problem
\begin{align}\label{eq:appendix-generalized-two-stages}
  &\min_{q_{\phi}} \quad &\E_{q_{\phi}(\vx_s,\vx_t; s, t)} \Big[\vv_\psi(\vx_s; s, t)^T\left[\nabla \log p(\vx_s; s)-\nabla_{\vx_s}\log q_{\phi}(\vx_s|\vx_t;s,t)\right]-h^*(\vv_\psi(\vx_s; s, t))\Big], \\
  &\min_{\vv_{\psi}} \quad &\E_{q_{\phi}(\vx_s,\vx_t; s, t)} \Big[\vv_\psi(\vx_s; s, t)^T\left[\nabla_{\vx_s}\log q_{\phi}(\vx_s|\vx_t;s,t)-\nabla \log p(\vx_s; s)\right]+ h^*(\vv_\psi(\vx_s; s, t))\Big].
\end{align}
If the regularization strength parameter $\lambda >0$, the optimization problem with a regularized term is formed as 
\begin{equation}\label{eq:appendix-regularized-second-stage}
  \min_{\vv_{\psi}} \quad \E_{q_{\phi}(\vx_s; s, t)} \Big[\vv_\psi(\vx_s; s, t)^T\left[\nabla_{\vx_s}\log q_{\phi}(\vx_s;s,t)-\nabla \log p(\vx_s; s)\right]+ (1+\lambda) h^*(\vv_\psi(\vx_s; s, t))\Big].
\end{equation}
We can view the above minization formulation as the Legendre transformation with the convex function $(1+\lambda)h^*(\cdot)$, then the optimal function $\vv_{\psi^*(\phi)}(\cdot;s,t)$ of the above optimization problem satisfies
\begin{align}
  &\E_{q_{\phi}(\vx_s; s, t)} \Big[\vv_{\psi^*(\phi)}(\cdot;s,t)^T\left[\nabla_{\vx_s}\log q_{\phi}(\vx_s;s,t)-\nabla \log p(\vx_s; s)\right]+ (1+\lambda) h^*(\vv_{\psi^*(\phi)}(\cdot;s,t))\Big] ,\\
=&-\E_{q_{\phi}(\vx_s; s, t)}\Big[(1+\lambda) h^*\Big]^*\Big(\nabla \log p(\vx_s; s)-\nabla_{\vx_s}\log q_{\phi}(\vx_s;s,t)\Big) ,\\
=&-\E_{q_{\phi}(\vx_s; s, t)}(1+\lambda)h\Big(\frac{1}{1+\lambda}(\log p(\vx_s; s)-\nabla_{\vx_s}\log q_{\phi}(\vx_s;s,t))\Big).
\end{align}
Bring the above optimal function $\vv_{\psi^*(\phi)}(\cdot;s,t)$ back to the first stage optimization problem (\ref{eq:appendix-generalized-two-stages}), then we have
\begin{align}\label{eq:appendix-second-stage}
  &\E_{q_{\phi}(\vx_s,\vx_t; s, t)} \Big[\vv_{\psi^*(\phi)}(\vx_s; s, t)^T\left[\nabla \log p(\vx_s; s)-\nabla_{\vx_s}\log q_{\phi}(\vx_s|\vx_t;s,t)\right]-h^*(\vv_{\psi^*(\phi)}(\vx_s; s, t))\Big],\\
  =&\E_{q_{\phi}(\vx_s,\vx_t; s, t)}(1+\lambda)h\Big(\frac{1}{1+\lambda}(\log p(\vx_s; s)-\nabla_{\vx_s}\log q_{\phi}(\vx_s;s,t))\Big) + \lambda h^*\Big(\vv_{\psi^*(\phi)}(\vx_s; s, t)\Big).
\end{align}
As $h$ is a strongly smooth convex function, the Legendre transformation $h^*$ is also a strongly convex function, and $h(\vx)=0$ if and only if $\vx=\bm{0}$.
When $\vv_{\psi^*(\phi)}(\vx_s; s, t) \equiv \bm{0}$, the first derivate condition of the optimal function $\vv_{\psi^*(\phi)}(\cdot;s,t)$ is satisfied
$$
\left[\nabla_{\vx_s}\log q_{\phi}(\vx_s;s,t)-\nabla \log p(\vx_s; s)\right] + (1+\lambda)h(\bm{0}) = \left[\nabla_{\vx_s}\log q_{\phi}(\vx_s;s,t)-\nabla \log p(\vx_s; s)\right]=0.
$$
Therefore, the global optimal distribution of (\ref{eq:appendix-second-stage}) is $q_{\phi^*}(\vx_s;s,t)=\nabla \log p(\vx_s; s)$ that similar with the original optimization problem (\ref{eq:appendix-gfi}). 
So the two stage optimization problem with regularization term has the similar optimal solution as the original optimization problem.

In practice, we choose the function $h(\vx) = \frac{1}{4\alpha} \|\vx\|_2^2$, then the Legendre transformation of $h(\vx)$ is that
\begin{equation*}
  h^*(\vy) = \max_{\vx} \{\vy^T\vx - \frac{1}{4\alpha} \|\vx\|_2^2\} = \alpha\|\vy\|_2^2.
\end{equation*}
Substituting the Legendre transformation $h^*(\vx)$ back into the second stage optimization problem (\ref{eq:appendix-regularized-second-stage}), we obtain
\begin{align*}
  &\E_{q_{\phi}(\vx_s; s, t)} \Big[\vv_\psi(\vx_s; s, t)^T\left[\nabla\log q_{\phi}(\vx_s;s,t)-\nabla \log p(\vx_s;s)\right]+ (1+\lambda)\alpha \|(\vv_\psi(\vx_s; s, t)\|_2^2\Big], \\
  =& (1+\lambda)\alpha \E_{q_{\phi}(\vx_s; s, t)} \Big\|\vv_\psi(\vx_s; s, t)  - \frac{1}{2(1+\lambda)\alpha} \left[-\nabla\log q_{\phi}(\vx_s;s,t)+\nabla \log p(\vx_s;s)\right]\Big\|_2^2 + C(\phi),
\end{align*}
where $C(\phi):=\E_{q_{\phi}(\vx_s; s, t)}\|\nabla\log q_{\phi}(\vx_s;s,t)-\nabla \log p(\vx_s;s)\|_2^2$ and is independent of $\psi$. 
Therefore, the optimal function of the second stage optimization problem (\ref{eq:appendix-regularized-second-stage}) is that 
\begin{align*}
  \vv_{\psi^*(\phi)}(\vx_s; s, t) =& \nabla \log p(\vx_s;s) - \vf_{\psi^*(\phi)}(\vx_s;s,t)\\
  =&\frac{1}{2(1+\lambda)\alpha} \left[-\nabla\log q_{\phi}(\vx_s;s,t)+\nabla \log p(\vx_s;s)\right].
\end{align*}
Replacing the target score function $\nabla \log p(\vx_s;s)$ with the estimated score model $\vS_{\theta^*}(\vx_s;s)$, we eventually write the optimal $\vf_{\tilde{\psi}^*(\phi)}(\vx_s;s,t)$ of the second stage optimization problem with the regularization term as
\begin{align}
  \vu_{\tilde{\psi}^*(\phi)}(\vx_s; s, t) &= \vS_{\theta^*}(\vx_s; s) - \vf_{\tilde{\psi}^*(\phi)}(\vx_s;s,t)\\
  \vf_{\tilde{\psi}^*(\phi)}(\vx_s; s, t) &= \frac{1}{2\alpha(1+\lambda)} \nabla \log q_\phi(\vx_s; s, t) + (1-\frac{1}{2\alpha(1+\lambda)}) \vS_{\theta^*}(\vx_s; s).
\end{align}
Moreover, (\ref{eq:appendix-second-stage}) is rewritten as
\begin{align*}
  &\E_{q_{\phi}(\vx_s,\vx_t; s, t)} \Big[\vu_{\tilde{\psi}^*(\phi)}(\vx_s; s, t)^T\left[\vS_{\theta^*}(\vx_s;s)-\nabla_{\vx_s}\log q_{\phi}(\vx_s|\vx_t;s,t)\right]-\alpha\|(\vu_{\tilde{\psi}^*(\phi)}(\vx_s; s, t))\|_2^2\Big],\\
  =&\frac{(1+2\lambda)}{4\alpha(1+\lambda)^2}\E_{q_{\phi}(\vx_s; s, t)} \|\vS_{\theta^*}(\vx_s;s) - \nabla \log q_\phi(\vx_s;s,t)\|_2^2.
\end{align*}
Therefore, the equilibrium of the two-stage optimization problem for $\phi$ remains consistent with (\ref{eq:nash-equilibrium}).

\end{proof}
\subsection{Proof of Proposition \ref{prop:consistency}} \label{app:consistency}
\begin{proof}
  As discussed in the proof of theorem \ref{thm:equilibrium}, the two-stages optimization problem in (\ref{eq:generalized-two-stage}) is equivalent to optimizing 
  \begin{equation}\label{eq:appendix-euiqvalent-loss}
    \min_{\phi} \quad \frac{(1+2\lambda)}{4\alpha(1+\lambda)^2}\E_{q_{\phi}(\vx_s; s, t)} \|\nabla \log p(\vx_s;s) - \nabla \log q_\phi(\vx_s;s,t)\|_2^2.
  \end{equation}  
  And the continuous transition kernel $q_{\textrm{trans}}(\vx_s|\vx_t;s,t)$ is parameterized as
  \begin{equation}
    q_{\phi}(\vx_s|\vx_t;s,t) = \mathcal{N}(\vx_s;a(s)\bm{G}_\phi(\vx_t, t), \sigma(s)^2\bm{I}),
  \end{equation}
  where $0 < s < t \le T$, $a(s), \sigma(s)\in \mathbb{R}^+$ are defined in (\ref{eq:cond-sample}).
  Then we have that the marginal distribution $q_{\phi}(\vx_s;s,t)$ is supported on $\mathbb{R}^d$ since
  \begin{equation}
    q_{\phi}(\vx_s;s,t) = \int q_{\phi}(\vx_s|\vx_t;s,t)p(\vx_t;t)d\vx_t > 0, \quad \forall \vx_s \in \mathbb{R}^d.
  \end{equation}
  Therefore, the optimal variational distribution $q_{\phi}(\vx_s;s,t)$ in (\ref{eq:appendix-euiqvalent-loss}) satisfies that
  $$
  \nabla \log p(\vx_s;s) = \nabla \log q_{\phi}(\vx_s;s,t), \quad \forall \vx_s \in \mathbb{R}^d.
  $$
  It implies that $q_{\phi}(\vx_s;s,t) = p(\vx_s;s)$. 
  Furthermore, for the independent random variables $\vx_t \sim p(\cdot;t)$ and $\bm{\epsilon} \sim \mathcal{N}(\bm{0},\bm{I})$, the characteristic function of $\bm{G}_\phi(\cdot,t)\sharp p(\cdot;t)$ satisfies
  \begin{align*}
    \varphi_{\bm{G}_\phi(\cdot,t)\sharp p(\cdot;t)}(\bm{w}) &= \E e^{i\bm{w}^T\bm{G}_\phi(\vx_t,t)} = \E e^{i\frac{1}{a(s)}\bm{w}^T\left[a(s)\bm{G}_\phi(\vx_t,t)+\sigma(s)\bm{\epsilon}\right]}(\E e^{i \frac{\sigma(s)}{a(s)}\bm{w}^T\bm{\epsilon}})^{-1},\\
    &= \E_{q_\phi(\vx_s;s,t)} e^{i\frac{1}{a(s)}\bm{w}^T \vx_s} (\E e^{i \frac{\sigma(s)}{a(s)}\bm{w}^T\bm{\epsilon}})^{-1} = \E_{p(\vx_s;s)} e^{i\frac{1}{a(s)}\bm{w}^T \vx_s} (\E e^{i \frac{\sigma(s)}{a(s)}\bm{w}^T\bm{\epsilon}})^{-1}, \\
    &= \E_{p(\vx_0;0)\cdot \mathcal{N}(\bm{\epsilon};\bm{0},\bm{I})} e^{i\frac{1}{a(s)}\bm{w}^T\left[a(s)\vx_0+\sigma(s)\bm{\epsilon}\right]}(\E e^{i \frac{\sigma(s)}{a(s)}\bm{w}^T\bm{\epsilon}})^{-1} = \varphi_{\vx_0}(\bm{w}) .
  \end{align*}
  Therefore, the characteristic function of $\bm{G}_\phi(\cdot,t)\sharp p(\cdot;t)$ is the same as the characteristic function of $p(\vx_0;0)$.
  We can conclude that $\bm{G}_\phi(\cdot,t)$ establishes a mapping from distribution $p(\cdot;t)$ back to $p(\cdot;0)$.
\end{proof}

\subsection{Proof of Proposition \ref{prop:convergence-onestep}} \label{app:pf-convergence-onestep}

\begin{proof}[Proof of Proposition \ref{prop:convergence-onestep}]
  As the approximated variational distribution $q_{\hat{\phi}}$ is obtained by 
  \begin{equation}
    q_{\hat{\phi}}(\cdot;s,t) \in \argmin_{q_\phi \in \mathcal{Q}} \E_{q_{\phi}(\vx_s,\vx_t; s, t)} \Big[\vu_{\hat{\psi}(\phi)}(\vx_s; s, t)^T\left[\nabla\log p(\vx_s;s)-\nabla_{\vx_s}\log q_{\phi}(\vx_s|\vx_t;s,t)\right]-\alpha\|\vu_{\hat{\psi}(\phi)}(\vx_s; s, t)\|_2^2\Big],
  \end{equation}
  where $\vu_{\hat{\psi}(\phi)}(\vx_s; s, t):= \nabla\log p(\vx_s;s)-\vf_{\hat{\psi}(\phi)}(\vx_s; s, t)$.
  
  We can bound the Fisher divergence between $q_{\hat{\phi}}$ and $p(\cdot;s)$ as follows.
  First we rewrite the first stage optimization objective as 
  \begin{align}\label{eq:appendix-decom-first-stage-loss}
    &\E_{q_{\phi}(\vx_s,\vx_t; s, t)} \Big[\vu_{\psi}(\vx_s; s, t)^T\left[\nabla\log p(\vx_s;s)-\nabla_{\vx_s}\log q_{\phi}(\vx_s|\vx_t;s,t)\right]-\alpha\|\vu_{\psi}(\vx_s; s, t)\|_2^2\Big], \nonumber\\
    = &\E_{q_{\phi}(\vx_s; s, t)} \Big[\vu_{\psi}(\vx_s; s, t)^T\left[\nabla\log p(\vx_s;s)-\nabla_{\vx_s}\log q_{\phi}(\vx_s;s,t)\right]-\alpha\|\vu_{\psi}(\vx_s; s, t)\|_2^2\Big], \nonumber\\
    = &\frac{1+2\lambda}{4\alpha(1+\lambda)^2}\E_{q_{\phi}(\vx_s; s, t)} \|\bm{\delta}_\phi(\vx_s;s,t)\|_2^2 +\frac{\lambda}{1+\lambda} \E_{q_{\phi}(\vx_s; s, t)} \left<\bm{\delta}_\phi(\vx_s;s,t), \vu_\psi(\vx_s; s, t) - \frac{1}{2\alpha(1+\lambda)}\bm{\delta}_\phi(\vx_s;s,t)\right>  \nonumber\\
    - &\alpha \E_{q_{\phi}(\vx_s; s, t)} \|\vu_{\psi}(\vx_s; s, t) - \frac{1}{2\alpha (1+\lambda)}\bm{\delta}_\phi(\vx_s;s,t)\|_2^2, 
  \end{align}
  where $\bm{\delta}_\phi(\vx_s;s,t):= \nabla\log p(\vx_s;s) - \nabla \log q_\phi(\vx_s;s,t)$. Then we have
  \begin{align*}
    &\frac{1}{4\alpha(1+\lambda)^2}\textrm{FI}(q_{\hat{\phi}}(\cdot;s,t)\| p(\cdot;s)) = \frac{1}{4\alpha(1+\lambda)^2}\E_{q_{\hat{\phi}}(\vx_s;s,t)}\|\nabla\log p(\vx_s;s)-\nabla\log q_{\hat{\phi}}(\vx_s;s,t)\|_2^2\\
    &= \underbrace{\frac{1+2\lambda}{4\alpha(1+\lambda)^2}\E_{q_{\hat{\phi}}(\vx_s;s,t)}\|\bm{\delta}_{\hat{\phi}}(\vx_s;s,t)\|_2^2}_{\cirone} - \frac{\lambda}{2\alpha(1+\lambda)^2}\textrm{FI}(q_{\hat{\phi}}(\cdot;s,t)\| p(\cdot;s)),
  \end{align*}
  Bring (\ref{eq:appendix-decom-first-stage-loss}) into $\cirone$, we have
  \begin{align*}
    \cirone =& \E_{q_{\hat{\phi}}(\vx_s; s, t)} \Big\{\Big[\vu_{\hat{\psi}(\hat{\phi})}(\vx_s; s, t)^T\left[\nabla\log p(\vx_s;s)-\nabla_{\vx_s}\log q_{\hat{\phi}}(\vx_s;s,t)\right]-\alpha\|\vu_{\hat{\psi}(\hat{\phi})}(\vx_s; s, t)\|_2^2\Big]\\
    - & \underbrace{\frac{\lambda}{1+\lambda} \left<\bm{\delta}_{\hat{\phi}}(\vx_s;s,t), \vu_{\hat{\psi}(\hat{\phi})}(\vx_s; s, t) - \frac{1}{2\alpha(1+\lambda)}\bm{\delta}_{\hat{\phi}}(\vx_s;s,t)\right>}_{\cirtwo} + \alpha \|\vu_{\hat{\psi}(\hat{\phi})}(\vx_s; s, t) - \frac{1}{2\alpha (1+\lambda)}\bm{\delta}_{\hat{\phi}}(\vx_s;s,t)\|_2^2\Big\}, \\
    \le & \underbrace{\E_{q_{\phi^*}(\vx_s; s, t)} \Big\{\Big[\vu_{\hat{\psi}(\phi^*)}(\vx_s; s, t)^T\left[\nabla\log p(\vx_s;s)-\nabla \log q_{\phi^*}(\vx_s;s,t)\right]-\alpha\|\vu_{\hat{\psi}(\phi^*)}(\vx_s; s, t)\|_2^2\Big]}_{\cirthree} - \E_{q_{\hat{\phi}}(\vx_s; s, t)}(\cirtwo) + \alpha \varepsilon_{\textrm{f}}^2. 
  \end{align*}
  For the term $\E_{q_{\hat{\phi}}(\vx_s; s, t)}(\cirtwo)$, we have
  \begin{align*}
    -\E_{q_{\hat{\phi}}(\vx_s; s, t)}(\cirtwo) \le& \frac{\lambda}{2\alpha(1+\lambda)^2}\E_{q_{\hat{\phi}}(\vx_s; s, t)} \|\bm{\delta}_{\hat{\phi}}(\vx_s;s,t)\|_2^2 + \frac{\alpha\lambda}{2}\E_{q_{\hat{\phi}}(\vx_s; s, t)} \|\vu_{\hat{\psi}(\hat{\phi})}(\vx_s; s, t) - \frac{1}{2\alpha(1+\lambda)}\bm{\delta}_{\hat{\phi}}(\vx_s;s,t)\|_2^2,\\
    \le& \frac{\lambda}{2\alpha(1+\lambda)^2} \textrm{FI}(q_{\hat{\phi}}(\cdot;s,t)\| p(\cdot;s)) + \frac{\alpha\lambda}{2}\varepsilon_{\textrm{f}}^2.
  \end{align*}
  For the term $\cirthree$, by (\ref{eq:appendix-decom-first-stage-loss})we have
  \begin{align*}
    \cirthree =& \frac{1+2\lambda}{4\alpha(1+\lambda)^2}\E_{q_{\phi^*}(\vx_s; s, t)} \|\bm{\delta}_{\phi^*}(\vx_s;s,t)\|_2^2 - \alpha \E_{q_{\phi^*}(\vx_s; s, t)} \|\vu_{\hat{\psi}(\phi^*)}(\vx_s; s, t) - \frac{1}{2\alpha (1+\lambda)}\bm{\delta}_{\phi^*}(\vx_s;s,t)\|_2^2\\
    + & \frac{\lambda}{1+\lambda} \E_{q_{\phi^*}(\vx_s; s, t)} \left<\bm{\delta}_{\phi^*}(\vx_s;s,t), \vu_{\hat{\psi}(\phi^*)}(\vx_s; s, t) - \frac{1}{2\alpha(1+\lambda)}\bm{\delta}_{\phi^*}(\vx_s;s,t)\right>  , \\
    \le& \frac{1+2\lambda}{4\alpha(1+\lambda)^2}\textrm{FI}(q_{\phi^*}(\cdot;s,t)\| p(\cdot;s)) - \alpha \E_{q_{\phi^*}(\vx_s; s, t)} \|\vu_{\hat{\psi}(\phi^*)}(\vx_s; s, t) - \frac{1}{2\alpha (1+\lambda)}\bm{\delta}_{\phi^*}(\vx_s;s,t)\|_2^2 \\
    + & \frac{\lambda^2}{4(1+\lambda)^2\alpha} \E_{q_{\phi^*}(\vx_s; s, t)} \|\bm{\delta}_{\phi^*}(\vx_s;s,t)\|_2^2 + \alpha \E_{q_{\phi^*}(\vx_s; s, t)} \|\vu_{\hat{\psi}(\phi^*)}(\vx_s; s, t) - \frac{1}{2\alpha (1+\lambda)}\bm{\delta}_{\phi^*}(\vx_s;s,t)\|_2^2,\\
    = & \frac{1}{4\alpha} \textrm{FI}(q_{\phi^*}(\cdot;s,t)\| p(\cdot;s)).
  \end{align*}
  Bring the above results back to $\cirone$, we have
  \begin{align}
    \textrm{FI}(q_{\hat{\phi}}(\cdot;s,t)\| p(\cdot;s)) = & 4\alpha(1+\lambda)^2(\cirone - \frac{\lambda}{2\alpha(1+\lambda)^2}\textrm{FI}(q_{\hat{\phi}}(\cdot;s,t)\| p(\cdot;s))), \nonumber\\
    \le & (1+\lambda)^2 \textrm{FI}(q_{\phi^*}(\cdot;s,t)\| p(\cdot;s)) + 2\alpha^2(1+\lambda)^2(2+\lambda)\varepsilon_{\textrm{f}}^2.
  \end{align}
  Introduce the approximation gap from Assumption \ref{assumption:G-phi-error}, we have 
  \begin{align}\label{eq:appendix-FI-theta-star}
    \textrm{FI}(q_{\hat{\phi}}(\cdot;s,t)\| p(\cdot;s)) \le 2(1+\lambda)^2 \varepsilon^2_{\textrm{c}}\textrm{FI}(p(\cdot;t+s)\|p(\cdot,s)) + 2\alpha^2(1+\lambda)^2(2+\lambda)\varepsilon_{\textrm{f}}^2.
  \end{align}
  The assumption \ref{assumption:G-phi-error} quantifies the capacity of the consistency map $\bm{G}_{\phi}(\cdot,t)$ based on the initial gap. 
  That is, the $q_{\phi_0}(\vx;s,t) = \int p(\vx|\vx_t;0,s)p(\vx_t;t)\dif \vx_t = p(\vx;t+s)$ when $\bm{G}_{\phi_0}(\vx_t,t) \equiv \vx_t$. 
  Therefore, the initial approximation gap $\textrm{FI}(q_{\phi_0}(\cdot;t+s)\|p(\cdot,s)) = \textrm{FI}(p(\cdot;t+s)\|p(\cdot,s))$.
\end{proof}

\subsection{Proof of Proposition \ref{thm:convergence-multistep}} \label{app:pf-convergence-multistep}

\begin{proof}[Proof of Proposition \ref{thm:convergence-multistep}]
  First we bound the Wasserstein distance between $q_{\hat{\phi}}(\cdot;0,t)$ and $p(\cdot;0)$.
  Since $p(\cdot;0)$ is supported on the hypercube $[-1,1]^d$, its support is contained in the Euclidean ball $B(0, \sqrt{d})$. 
  By the results of bounds on the logarithmic Sobolev inequality (LSI) constants \citep{bardet2018functional, chen2021LSI, cattiaux2022functional}, we can derive the following LSI inequality
  \begin{align}
    \E_{p(\vx_\delta;\delta)} f^2(\vx_\delta) (\log f^2(\vx_\delta) - \log \E_{p(\vx_\delta;\delta)} f^2(\vx_\delta)) \le 2 C_{\textrm{LSI}} \E_{p(\vx_\delta;\delta)} \|\nabla f(\vx_\delta)\|_2^2, \quad \forall f \in \mathcal{C}^{\infty}(\mathbb{R}^d),
  \end{align} 
  where the LSI constant $C_{\textrm{LSI}} \le 6(4d+\sigma(\delta))\exp(\frac{4d}{\sigma(\delta)^2})$.
  Let $f^2(\cdot) = \frac{q_{\hat{\phi}}(\cdot;\delta,t)}{p(\cdot;\delta)}$, then we have
  \begin{equation*}
    \textrm{KL}(q_{\hat{\phi}}(\cdot;\delta,t)\| p(\cdot;\delta)) \lesssim (4d+\sigma(\delta))\exp(\frac{4d}{\sigma(\delta)^2})\textrm{FI}(q_{\hat{\phi}}(\cdot;\delta,t)\| p(\cdot;\delta))
  \end{equation*}
  Moreover, by Otto-Villani theorem \citep{Otto2000GeneralizationOA}, the following Talagrand's inequality holds
  \begin{equation*}
    W_2^2(q_{\hat{\phi}}(\cdot;\delta,t), p(\cdot;\delta)) \le 2C_{\textrm{LSI}} \textrm{KL}(q_{\hat{\phi}}(\cdot;\delta,t)\| p(\cdot;\delta)) 
    \lesssim (4d+\sigma(\delta))^2\exp(\frac{8d}{\sigma(\delta)^2}) \textrm{FI}(q_{\hat{\phi}}(\cdot;\delta,t)\| p(\cdot;\delta)).
  \end{equation*}
  Denote $\Delta:=(4d+\sigma(\delta))^2\exp(\frac{8d}{\sigma(\delta)^2})$, we have the bound of Wasserstein distance between $q_{\hat{\phi}}(\cdot;\delta,t)$ and $p(\cdot;\delta)$ as
  \begin{align}\label{eq:appendix-W2-q-hat-p}
    W_2^2(q_{\hat{\phi}}(\cdot;\delta,t), p(\cdot;\delta)) \lesssim \mathcal{E}_{W_2^2}(t):=
    \varepsilon^2_{\textrm{c}} \Delta \textrm{FI}(p(\cdot;t+\delta)\|p(\cdot,\delta))+\Delta\varepsilon_{\textrm{f}}^2.
  \end{align}
  Next, following the methodology proposed in \citep{lyu2024sampling}, we give the bound for the multistep sampling of CoSIM.
  Given a sequence of fixed time points $T = t_0 \ge t_1\ge \cdots\ge t_{K-1}\ge t_K=\delta$, we denote $\vz_0\sim p(\cdot;t_0)$ and
  \begin{align}
  \vz_{k} &= a(t_k)\bm{G}_{\hat{\phi}}(\vz_{k-1},t_{k-1}) + \sigma(t_k) \vepsilon_{2k} ,\\
  q_{\hat{\phi}}^{(k)}(\cdot;\delta)&=\textrm{Law} \{\vz_{k}^{(\delta)} := a(\delta)\bm{G}_{\hat{\phi}}(\vz_{k-1},t_{k-1}) + \sigma(\delta) \vepsilon_{2k-1}\}, \quad k=1,2,\cdots,K,
  \end{align}
  where $\vepsilon_k \sim \mathcal{N}(\bm{0},\bm{I})$ are independent Gaussian noises, then it can be shown that $q_{\hat{\phi}}^{(1)}(\cdot;\delta) = q_{\hat{\phi}}(\cdot;\delta,T)$.
  For a fixed time points $t_k$, we consider the optimal transport coupling $ \gamma(\vz_k, \vx_{t_k}) \in \Gamma_{\textrm{opt}}(\textrm{Law}(\vz_k), p(\cdot;t_k))$ and let
  \begin{align*}
    \vz_{k+1}^\delta &= \left[ a(\delta)\bm{G}_{\hat{\phi}}(\vz_k , t_k) + \sigma(\delta) \vepsilon_{2k+1} \right]\sim q_{\hat{\phi}}^{(k)}(\cdot;\delta),\\
    \vy &= \left[ a(\delta)\bm{G}_{\hat{\phi}}(\vx_{t_k}, t_k) + \sigma(\delta) \vepsilon_{2k+1} \right] \sim q_{\hat{\phi}}(\cdot;\delta,t_k).
  \end{align*}
  Then we have 
  \begin{align}\label{eq:appendix-W2-q-hat-p-2}
    W_2(q_{\hat{\phi}}^{(k+1)}, p(\cdot;\delta)) \le & W_2(q_{\hat{\phi}}^{(k+1)}, q_{\hat{\phi}}(\cdot;\delta,t_k)) + W_2(q_{\hat{\phi}}(\cdot;\delta,t_k), p(\cdot;\delta)), \nonumber\\
    \le & \E_{(\vz_k, \vx_{t_k})\sim\gamma, \vepsilon_{2k+1}\sim \mathcal{N}(\bm{0},\bm{I})}\|\vz_{k+1}^\delta - \vy\|_2^2 + W_2^2(q_{\hat{\phi}}(\cdot;\delta,t_k), p(\cdot;\delta)), \nonumber\\
    = & a(\delta) \E_{(\vz_k, \vx_{t_k})\sim\gamma}\|\bm{G}_{\hat{\phi}}(\vz_k , t_k) - \bm{G}_{\hat{\phi}}(\vx_{t_k}, t_k)\|_2^2 + W_2^2(q_{\hat{\phi}}(\cdot;\delta,t_k), p(\cdot;\delta)).
  \end{align}
  Recall that the consistency function $\bm{G}_{\hat{\phi}}(\cdot,t)$ is parameterized as 
  \begin{equation}
    \bm{G}_\phi(\vx_t,t) = \mathcal{O}\left(c_{\textrm{skip}}(t)\vx_t + c_\textrm{out}(t) \bm{F}_\phi(c_{\textrm{in}}(t)\vx_t,t)\right),
  \end{equation}
  where 
  \begin{align}
    c_{\textrm{skip}}(t) &= \frac{\sigma^2_{\textrm{data}}a(t)^2}{\sigma(t)^2+\sigma_{\textrm{data}}^2a(t)^2}, \\
    c_{\textrm{out}}(t)  &= (\frac{\sigma_{\textrm{data}}^2\sigma(t)^2a(t)^2}{\sigma(t)^2 + \sigma_{\textrm{data}}^2a(t)^2})^{1/2}, \\
     c_{\textrm{in}}(t) &= (\frac{1}{\sigma_{\textrm{data}^2a(t)^2+\sigma(t)^2}})^{1/2}.
  \end{align}
  Practically, $\sigma_{\textrm{data}}:= \frac12$ is a fixed constant.
  Given any $\vx_1,\vx_2\in \mathbb{R}^d$ and by the $L_f$-Lipschitz continuity of $\bm{F}_{\phi}(\cdot,t)$ with respect to $\vx_t$, we have
  \begin{align}\label{eq:appendix-G-phi-Lipschitz}
    \|\bm{G}_\phi(\vx_1,t)- \bm{G}_\phi(\vx_2,t)\|_2 \le & \|c_{\textrm{skip}}(t)(\vx_1-\vx_2) + c_\textrm{out}(t) (\bm{F}_\phi(c_{\textrm{in}}(t)\vx_1,t)-\bm{F}_\phi(c_{\textrm{in}}(t)\vx_2,t)) \|_2 ,\nonumber\\
    \le & (c_{\textrm{skip}}(t) + c_{\textrm{out}}(t) L_f c_{\textrm{in}}(t)) \|\vx_1-\vx_2\|_2, \nonumber\\
    \le & (\frac{a(t)}{\sigma(t)^2+a(t)^2} + 2 L_f \frac{\sigma(t)a(t)}{\sigma(t)^2+a(t)^2})\|\vx_1-\vx_2\|_2^2.
  \end{align}
  In the case of variance preserving (VP) scheme for (\ref{eq:sde}), we have 
  \begin{equation}\label{eq:appendix-a-sigma-vp}
  a(t) = e^{-t}, \quad \sigma(t) = \sqrt{1-e^{-2t}}, \quad \forall  t\in [0,T].
  \end{equation}
  This implies that $\bm{G}_\phi(\vx,t)$ is $3L_f$-Lipschitz continuous with respect to $\vx$ if $a(t),\sigma(t)$ is defined as (\ref{eq:appendix-a-sigma-vp}).

  In the case of variance exploding (VE) scheme for (\ref{eq:sde}), we have
  \begin{equation}\label{eq:appendix-a-sigma-ve}
    a(t) = 1, \quad \sigma(t) = t, \quad \forall  t\in [0,T].
  \end{equation}
  Then $\bm{G}_\phi(\vx,t)$ is $(\frac{2L_f}{t+1}+\frac{1}{t^2+1})$-Lipschitz continuous in the variance exploding scenario. 
  Denote these two Lipschitz constant as $L_{\textrm{vp}} = 3L_f$ adn $L_{\textrm{ve}} = \frac{2L_f}{t+1}+\frac{1}{t^2+1}$ respectively.
  To ensure the condition $\sigma(\delta) > a(\delta)\sqrt{d}$, we have the early stopping time $\delta_{\textrm{vp}} = \mathcal{O}(log(d))$ for VP scheme and $\delta_{\textrm{ve}} = \mathcal{O}(\sqrt{d})$ for VE scheme. 

  For the variance preserving scheme, substituting the above results into (\ref{eq:appendix-W2-q-hat-p-2}) yields
  \begin{align*}
    W_2(q_{\hat{\phi}}^{(k+1)}, p(\cdot;\delta)) \le & a(\delta)L_{\textrm{vp}} \E_{(\vz_k, \vx_{t_k})\sim\gamma}\|\vz_k - \vx_{t_k}\|_2^2 + W_2(q_{\hat{\phi}}(\cdot;\delta,t_k), p(\cdot;\delta)), \\
    \le & a(\delta)L_{\textrm{vp}} W_2(\textrm{Law}(z_k), p(\cdot;t_k)) + W_2(q_{\hat{\phi}}(\cdot;\delta,t_k), p(\cdot;\delta)).
  \end{align*}
  Next, given the optimal coupling $\gamma'(\vz',\vx') \in \Gamma_{\textrm{opt}} (q_{\hat{\phi}}^{(k)}(\cdot;\delta), p(\cdot;\delta))$,  we consider the following coupling $\gamma_1(\vz,\vx) \sim \Gamma (\textrm{Law}(\vz_k), p(\cdot;t_k))$ 
  \begin{align*}
    \vz &= a(t_k-\delta) \vz_k^{(\delta)} + \sigma(t_k-\delta) \vepsilon_{2k} ,\\
    \vx &= a(t_k-\delta) \vx_\delta + \sigma(t_k-\delta) \vepsilon_{2k}.
  \end{align*}
  Then 
  \begin{equation*}
    W_2(\textrm{Law}(z_k), p(\cdot;t_k)) \le (\E_{(\vz, \vx)\sim\gamma_1}\|\vz - \vx\|_2^2)^{1/2} = a(t_k-\delta) \E_{\gamma'}\|\vz'-\vx'\|_2^2 \le a(t_k-\delta)W_2(q_{\hat{\phi}}^{(k)}(\cdot;\delta), p(\cdot;\delta)).
  \end{equation*}
  Therefore, we have
  \begin{equation*}
    W_2(q_{\hat{\phi}}^{(k+1)}, p(\cdot;\delta)) \lesssim a(t_k) L_{\textrm{vp}} W_2(q_{\hat{\phi}}^{(k)}(\cdot;\delta), p(\cdot;\delta)) + \mathcal{E}^{\frac12}_{W_2^2}(t_k).
  \end{equation*}

  Let $t_1=\cdots=t_{K-1}=t_{\textrm{mid}} > 1$ with a fixed $t_{\textrm{mid}}$ and apply the discrete type Gr\"onwall's inequality \citep{Gronwall1919NoteOT}, we have
  \begin{align*}
  W_2(q_{\hat{\phi}}^{(K)}(\cdot;0), p(\cdot;0)) \lesssim & 
    W_2(q_{\hat{\phi}}^{(K)}(\cdot;\delta), p(\cdot;\delta)) + \delta^2d \\ 
    \lesssim & \delta^2d +(a(t_{\textrm{mid}})L_{\textrm{vp}})^{K-1}\mathcal{E}^{\frac12}_{W_2^2}(T) + (1-a(t_{\textrm{mid}})L_{\textrm{vp}})^{-1}\mathcal{E}^{\frac12}_{W_2^2}(t_{\textrm{mid}}),\\
    =&  \delta^2d+ e^{(K-1)(-t_{\textrm{mid}}+\log(3L_f))}\mathcal{E}^{\frac12}_{W_2^2}(T) + (1-e^{-t_{\textrm{mid}}+\log(3L_f)})^{-1}\mathcal{E}^{\frac12}_{W_2^2}(t_{\textrm{mid}}).
  \end{align*}
  For the variance exploding scheme, we have
  \begin{align*}
    W_2(q_{\hat{\phi}}^{(K)}(\cdot;0), p(\cdot;0)) \lesssim \delta^2 d + (\frac{2L_f}{t_{\textrm{mid}}+1})^{K-1}\mathcal{E}^{\frac12}_{W_2^2}(T) + (1-\frac{2L_f}{t_{\textrm{mid}}+1})^{-1}\mathcal{E}^{\frac12}_{W_2^2}(t_{\textrm{mid}}).
  \end{align*}
  Therefore, the first term on right-hand side has an exponential decay rate as $t_{\textrm{mid}} = \log (4L_f)$ for VP scheme and $t_{\textrm{mid}} = 4L_f$ for VE scheme. 
\end{proof}

\section{Experimental Details}
\label{appdx: exp det}
Our consistency method is to build on top of existing pre-trained diffusion models and distill from them.
With numerous existing diffusion models, we choose our base models using the following criteria:
\begin{enumerate}
    \item Completely open-source, including checkpoints, model architectures, and all training and inferencing details.
    \item Results that are generally recognized to be reproducible.
    \item State-of-the-art performance. 
\end{enumerate}
Fortunately, all these are satisfied with EDM~\citep{karras2022elucidating} and EDM2~\citep{karras2024analyzing}. They are set to be our base models. 

\paragraph{Pre-processing and Evaluation}We followed EDM to process CIFAR10 and Imagenet $64\times64$, and followed EDM2 for Imagenet $512\times512$~. Most of the consistency methods we compared with in Table 1,2,3 follow the same pre-processing protocol. However, CTM~\citep{kim2023consistency} followed a different way to down-sample Imagenet to $64\times64$. More precisely, CTM has a different down-sampling kernel compared to EDM. 

This caussed a significant disruption to the FD-DINOv2 value. Because FD-DINOv2 is computed at a fixed resolution $224\times224$ and all images at lower resolution have to be up-sampled before computing this value, the final result will be very sensitive to the down-sampling kernel. For a fair comparison, we compute FD-DINOv2 between CTM generated 50K images and  1,281,167 Imagenet images down-sampled to $64\times64$ using the same kernel as CTM. 

\paragraph{Network Architecture and Initialization}Following Algorithm \ref{algo:training-procedure} and Section \ref{sec: path to consistency}, there are three networks in our training process, generator $\bm{G}_\phi$, teacher $\vS_{\theta^*}$ and auxiliary function $\vf_{\psi}$. All are initialized from the pre-trained checkpoints of our chosen base model. During our training, $\vS_{\theta^*}$ is frozen while generator $\bm{G}_\phi$ and auxiliary function $\vf_{\psi}$ are iteratively refined. During inferencing, only generator $\bm{G}_\phi$ is used to generate new samples. 

Since $\bm{G}_\phi, \vf_{\psi}, \vS_{\theta^*}$ are initialized from the same checkpoint, their architectures are kept the same as the base model. But $\vf_{\psi}$ must accommodate the additional $s$ input from Section \ref{sec: CSIM}, so we duplicate the time-embedding layer as described in Section \ref{sec: exp}. This only adds a very small number of extra parameters during training, and makes no change to the generator used during inferencing.

\paragraph{Hyperparameters}The hyperparameters for all of our experiments are presented in Table ~\ref{tab: hparam}. For those parameters not mentioned in this table, they are kept the same as the base model.  

\paragraph{Training Budget}We conducted all of our experiments using $8\times$ NVIDIA L40S GPU with 48GB video memory. For CIFAR10 $32\times 32$, we train our models for $\sim4$ days. For Imagenet $64\times64$, we need $\sim7$ days to reach our reported results. On Imagenet $512\times512$, though the final image size is significantly larger, the training of consistency model is still conducted on $64\times64$ internally. This is because EDM2~\citep{karras2024analyzing} applied a VAE to encode original input size into $64\times64$ latent size, and our consistency model only works on the latent feature. For S,M,L setups, the training takes $\sim3$, $\sim7$ and $\sim3$ days accordingly.

\paragraph{Sampling Steps}In Section ~\ref{sec:cond-generation}, we show that 4-step sampling generates better quality than 2-step sampling. Here we provide more visual results in Figure \ref{fig:appendix img512 L}~. Compared to 4-step sampling results, some details deteriorate in 2-step sampling, such as the floating leaves on the spider web, and the shape of shoes are not ideal.

\begin{table}[t]
    \caption{Hyperparameters for different experimental setup.}
    \label{tab: hparam}
    \centering
    \resizebox{1.0\textwidth}{!}{ 
    \begin{tabular}{cccccc}
    \toprule[1.5pt]
    Hyperparameters & CIFAR10 $32\times32$ & Imagenet $64\times64$ & Imagenet $512\times512$ S  & Imagenet $512\times512$ M & Imagenet $512\times512$ L\\
    \midrule
    Batch Size  & 2048 & 2048 & 2048 & 2048 & 2048 \\
    Batch per GPU  & 64 & 16 & 32 & 32 & 32\\
    Gradient accumulation round  & 4 & 16 & 8 & 8 & 8 \\
    \# of GPU (L40S 48G)  & 8 & 8 & 8 & 8 & 8\\
    Learning rate of $\bm{G}_\phi$ and $\vf_{\psi}$  & $1e^{-5}$ & $4e^{-6}$ & $2e^{-5}$ & $2e^{-5}$ & $1e^{-4}$\\
    \# of EMA half-life images & 0.5M & 2M & 2M & 2M & 2M\\
    Optimizer Adam eps & $1e^{-8}$ & $1e^{-12}$ & $1e^{-12}$ & $1e^{-12}$ & $1e^{-12}$\\
    Optimizer Adam $\beta_1$ & 0.0 & 0.0 & 0.0 & 0.0 & 0.0\\
    Optimizer Adam $\beta_2$  & 0.999 & 0.99 & 0.99 & 0.99 & 0.99 \\
    R & 4 & 8 & 8 & 4 & 4 \\
    \# of total training images & 200M & 200M & 200M & 200M & 20M\\
    \# of parameters & 56M & 296M & 280M & 498M & 778M \\
    dropout & 0 & 0.1 & 0 & 0.1 & 0.1\\
    augment & 0 & 0 & 0 & 0 & 0\\
    \bottomrule
    \end{tabular}}
\end{table}

\paragraph{On the Role of $\lambda$}
We conducted an ablation study on $\lambda$ with the coefficient $\mathrm{coef} := \alpha(1+\lambda)$ in (\ref{eq:generalized-two-stage}) for ImageNet $512 \times 512$ generation of CoSIM using the $L$ model size with fixed $\alpha=1.2$. 

\begin{table}[t!]
    \caption{FD-DINOv2 results of CoSIM on class-conditional ImageNet ($512\times512$) with different regularization strengths ($\mathrm{coef}$) and total training images. All results are evaluated with NFE$=2$.}
    \label{tab:coef-ablation}
    \centering
    \vskip0.5em
    \resizebox{0.5\linewidth}{!}{
    \begin{tabular}{ccccc}
    \toprule
    Regularization Strength& 204k & 1024k & 2048k & 4096k \\
    \midrule
    $\mathrm{coef}=0.5$ & $\bm{309.77}$ & $101.33$ & $69.65$ & $61.53$ \\
    $\mathrm{coef}=0.75$ & $392.85$ & $\bm{92.81}$ & $61.77$ & $50.54$ \\
    $\mathrm{coef}=1.0$ & $421.90$ & $95.23$ & $\bm{58.63}$ & $\bm{49.28}$ \\
    \bottomrule
    \end{tabular}}
\end{table}
The results show that introducing regularization (i.e., increasing $\mathrm{coef}$) within a suitable range significantly enhances the learning process of $\vf_\psi$, which in turn improves the training of $q_\phi$.

Here we provide more visual results on our experiments. 

\begin{figure*}[!ht]
    \centering
    \includegraphics[width=0.9\textwidth]{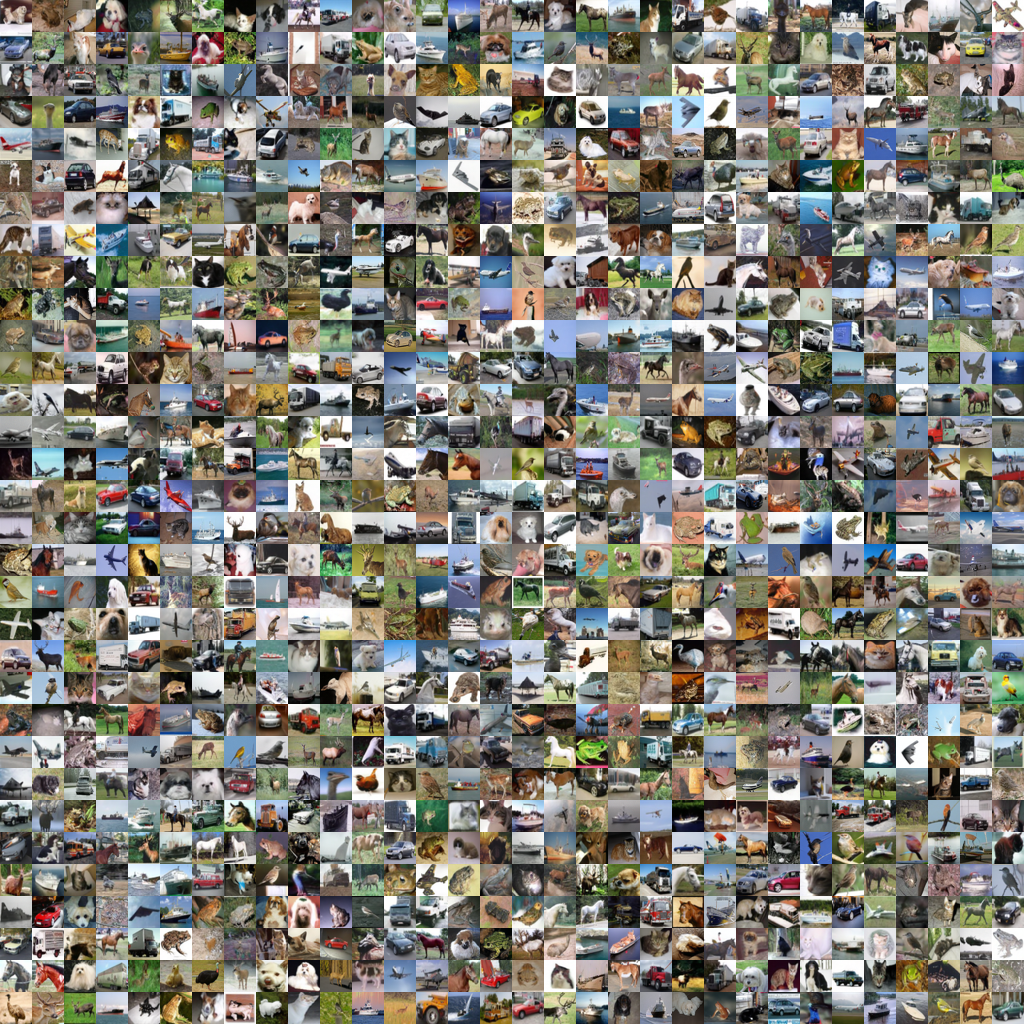}
    \caption{Unconditionally generated $32\times32$ images on CIFAR10 using 2-step sampling.}
    \label{fig:appendix cifar 2S}
\end{figure*}

\begin{figure*}[!ht]
    \centering
    \includegraphics[width=0.9\textwidth]{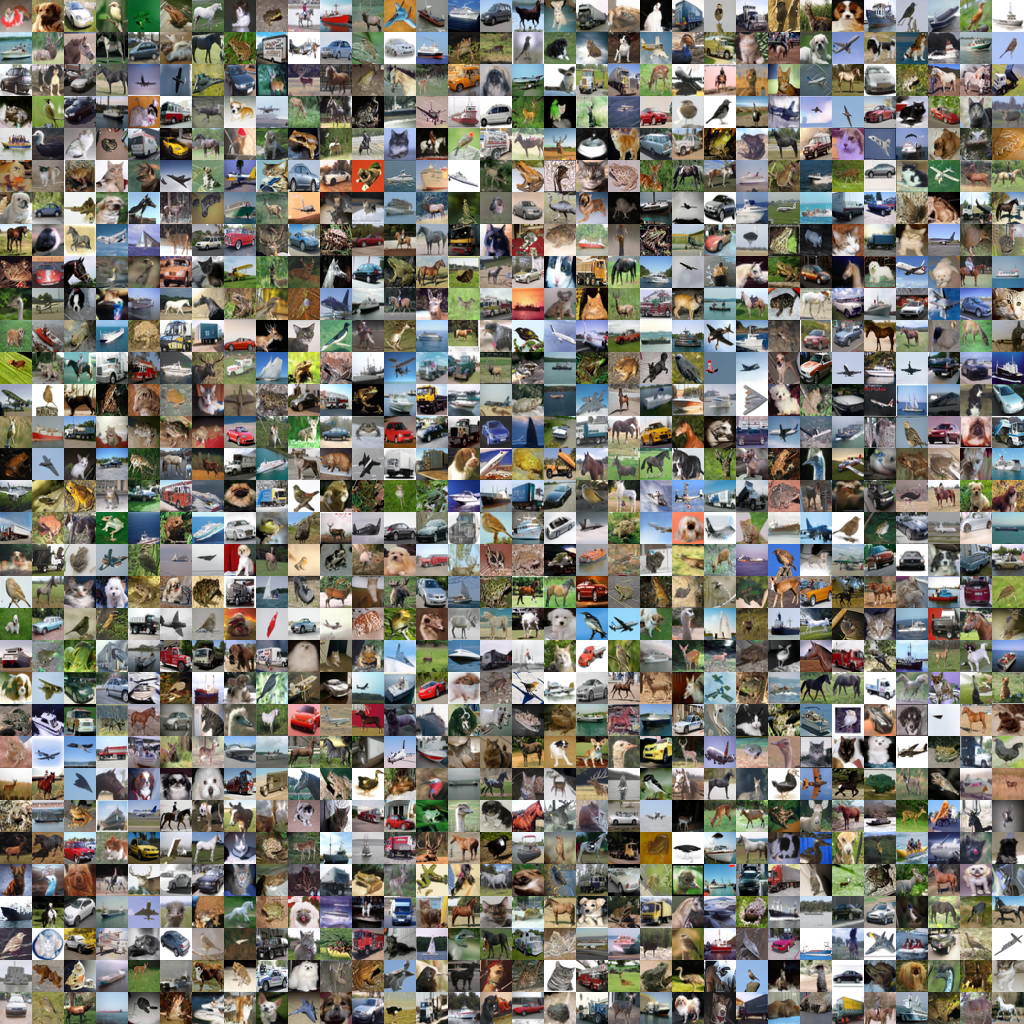}
    \caption{Unconditionally generated $32\times32$ images on CIFAR10 using 4-step sampling.}
    \label{fig:appendix cifar 4S}
\end{figure*}

\begin{figure*}[!ht]
    \centering
    \includegraphics[width=0.9\textwidth]{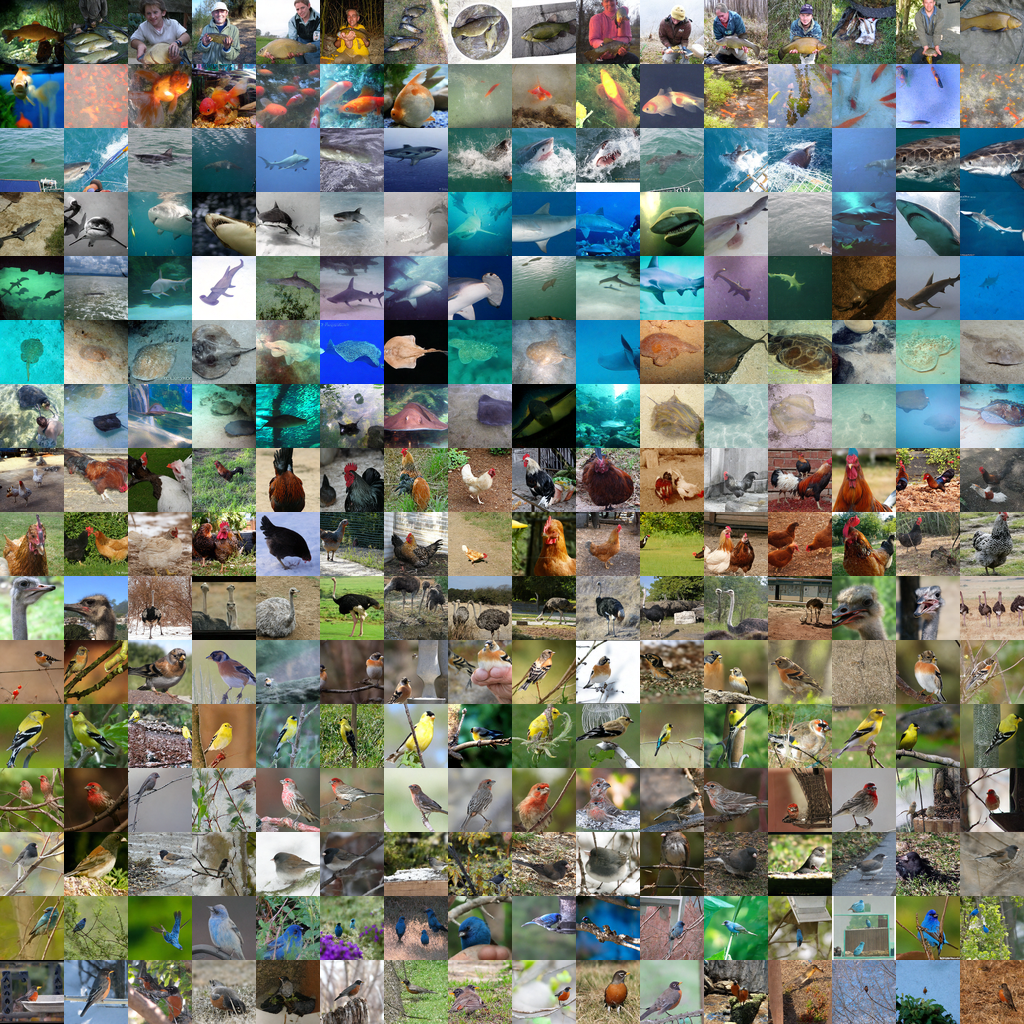}
    \caption{Conditionally generated $64\times64$ images on Imagenet using 2-step sampling.}
    \label{fig:appendix img64 2S}
\end{figure*}
\begin{figure*}[!ht]
    \centering
    \includegraphics[width=0.9\textwidth]{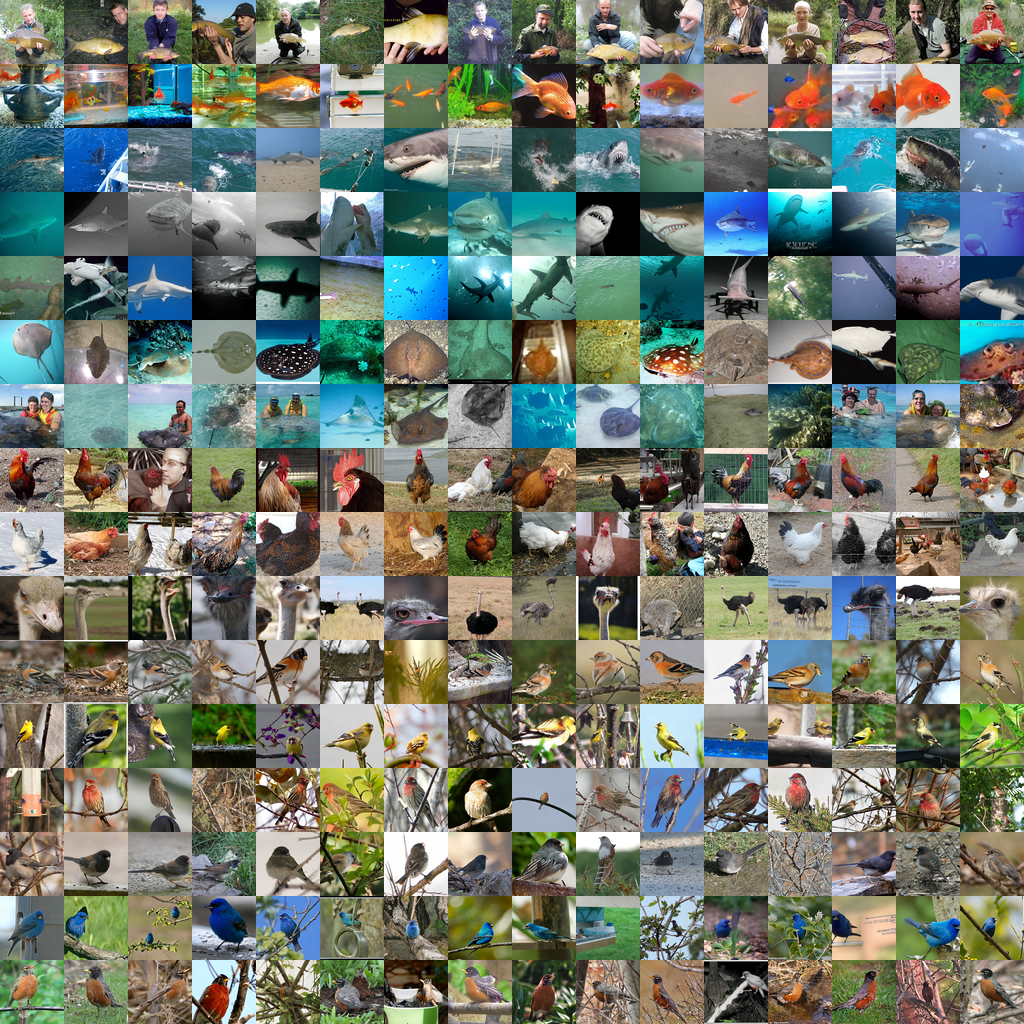}
    \caption{Conditionally generated $64\times64$ images on Imagenet using 4-step sampling.}
    \label{fig:appendix img64 4S}
\end{figure*}

\begin{figure*}[!ht]
    \centering
    \subfigure[2 steps]{
    \includegraphics[width=0.95\linewidth]{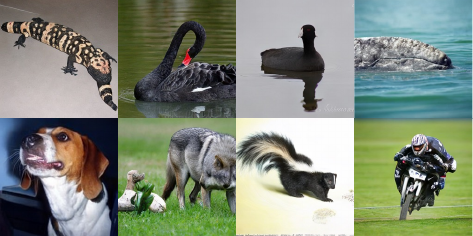}
    }
    \subfigure[4 steps]{
    \includegraphics[width=0.95\linewidth]{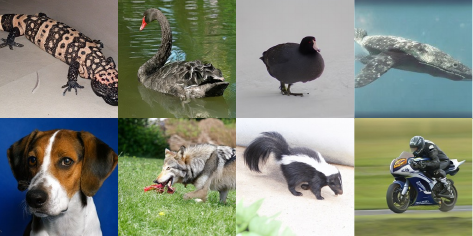}
    }
    \caption{Class-conditioned $512\times512$ images generated by CoSIM with different steps on Imagenet using M model, starting from identical noise.    
    }
    \label{fig:appendix img512 M}
\end{figure*}

\begin{figure*}[!ht]
    \centering
    \subfigure[2 steps]{
    \includegraphics[width=0.95\linewidth]{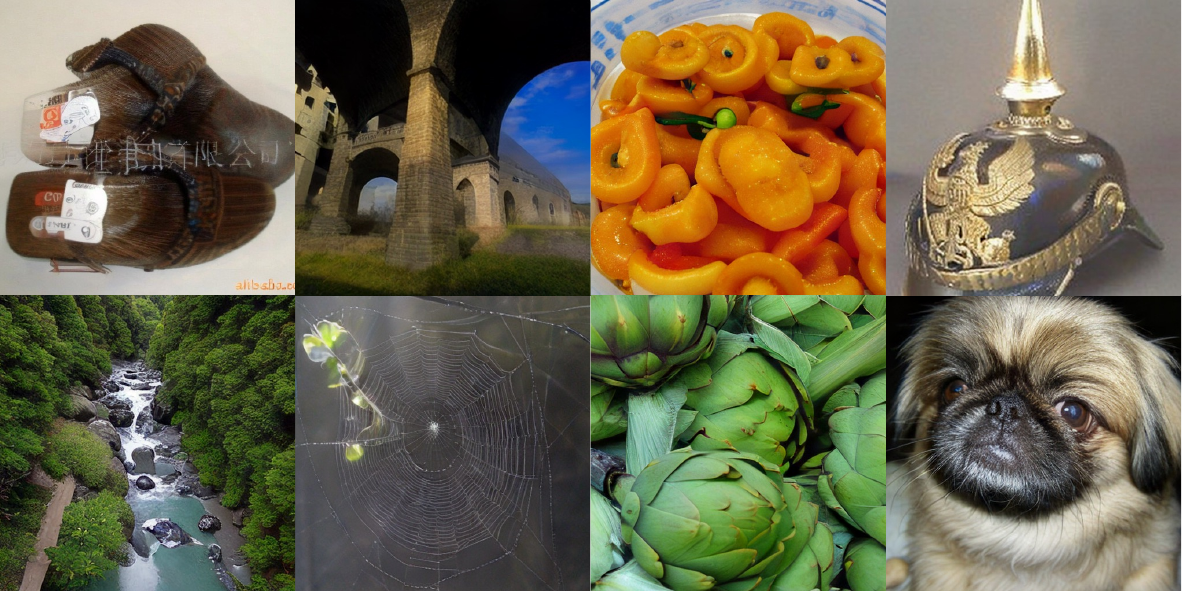}
    }
    \subfigure[4 steps]{
    \includegraphics[width=0.95\linewidth]{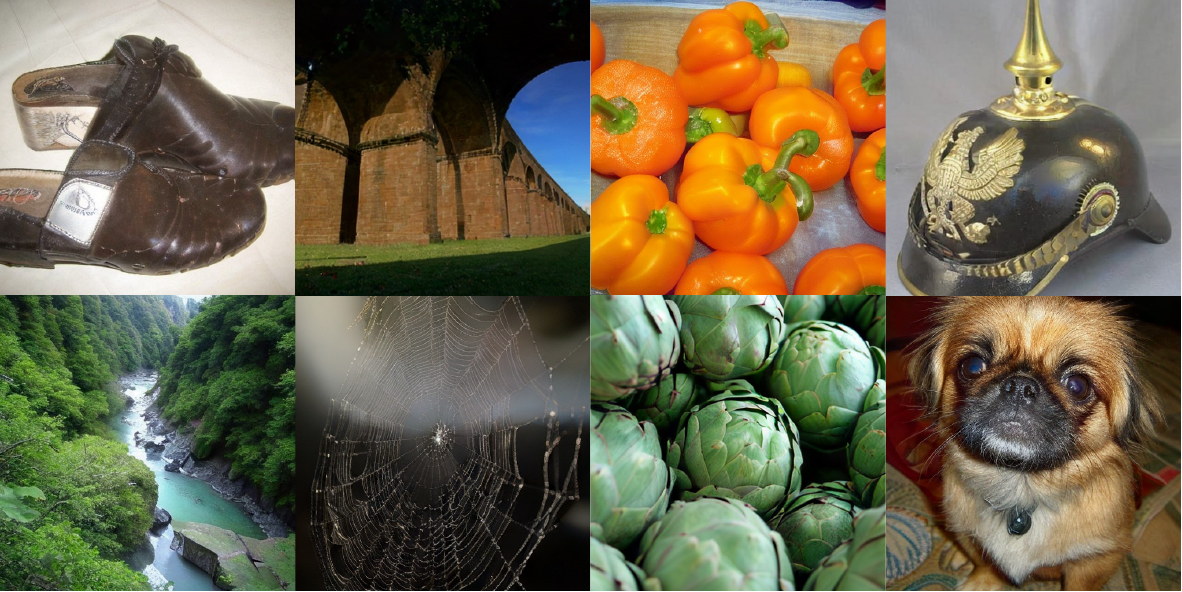}
    }
    \caption{Class-conditioned $512\times512$ images generated by CoSIM with different steps on Imagenet using L model, starting from identical noise. 
    }
    \label{fig:appendix img512 L}
\end{figure*}


\end{document}